\theoremstyle{plain}
\newtheorem{theorem}{Theorem}[section]
\newtheorem{lemma}[theorem]{Lemma}
\theoremstyle{definition}
\newtheorem{definition}[theorem]{Definition}
\theoremstyle{remark}
\icmltitlerunning{A Selective Learning Method for Temporal Graph Continual Learning}
\begin{document}

\twocolumn[
\icmltitle{A Selective Learning Method for Temporal Graph Continual Learning}



\icmlsetsymbol{equal}{*}

\begin{icmlauthorlist}
\icmlauthor{Hanmo LIU}{HKUST,HKUST(GZ)}
\icmlauthor{Shimin DI}{HKUST}
\icmlauthor{Haoyang LI}{PolyU}
\icmlauthor{Xun JIAN}{NWPU}
\icmlauthor{Yue WANG}{Shenzhen Institute of Computing Sciences}
\icmlauthor{Lei CHEN}{HKUST(GZ),HKUST}
\end{icmlauthorlist}

\icmlaffiliation{HKUST}{HKUST}
\icmlaffiliation{HKUST(GZ)}{HKUST(GZ)}
\icmlaffiliation{PolyU}{PolyU}
\icmlaffiliation{NWPU}{NWPU}
\icmlaffiliation{Shenzhen Institute of Computing Sciences}{Shenzhen Institute of Computing Sciences}

\icmlcorrespondingauthor{Hanmo LIU}{hliubm@connect.ust.hk}

\icmlkeywords{Continual Learning, Graph Learning, Temporal Graph}

\vskip 0.3in
]



\printAffiliationsAndNotice{}  

\begin{abstract}
Node classification is a key task in temporal graph learning (TGL). Real-life temporal graphs often introduce new node classes over time, but existing TGL methods assume a fixed set of classes. This assumption brings limitations, as updating models with full data is costly, while focusing only on new classes results in forgetting old ones. Graph continual learning (GCL) methods mitigate forgetting using old-class subsets but fail to account for their evolution. We define this novel problem as temporal graph continual learning (TGCL), which focuses on efficiently maintaining up-to-date knowledge of old classes. To tackle TGCL, we propose a selective learning framework that substitutes the old-class data with its subsets, Learning Towards the Future (LTF). We derive an upper bound on the error caused by such replacement and transform it into objectives for selecting and learning subsets that minimize classification error while preserving the distribution of the full old-class data. Experiments on three real-world datasets validate the effectiveness of LTF on TGCL.
\end{abstract}

\section{Introduction}
\label{sec: introduction}

Temporal graphs are essential data structures for real-world applications, such as social networks \cite{dataset-reddit} and online shopping \cite{dataset-amazon}. In temporal graphs, the edges and/or nodes change over time, with these additions or deletions captured as a sequence of events \cite{survey-temporal-graph-1, survey-temporal-graph-2}.
In recent years, various temporal graph learning (TGL) methods have been developed to extract insights from temporal graphs by incorporating temporal-neighbor information into node embeddings \cite{TGL-JODIE, TGL-TGAT, TGL-TGN, TGL-GraphMixer, TGL-TREND, TGL-EARLY, TGL-Zebra}. The current approaches for processing both structural and temporal information in these graphs utilize a range of model architectures, including message-passing mechanisms \cite{TGL-TGAT, TGL-TGN, TGL-TREND}, multi-layer perceptrons (MLPs) \cite{TGL-GraphMixer, related-MLP}, and transformers \cite{TGL-DyGFormer, backbone-transformer}.
A key application of TGL methods is node classification, which is a critical task in the analysis of temporal graphs \cite{TGL-TGAT, TGL-TGN, TGL-DyGFormer}.
For example, in social networks, TGL methods classify post topics, while in online shopping networks, they categorize items by themes.

\begin{figure}[!t]
\vspace{-5px}
\includegraphics[width=\linewidth]{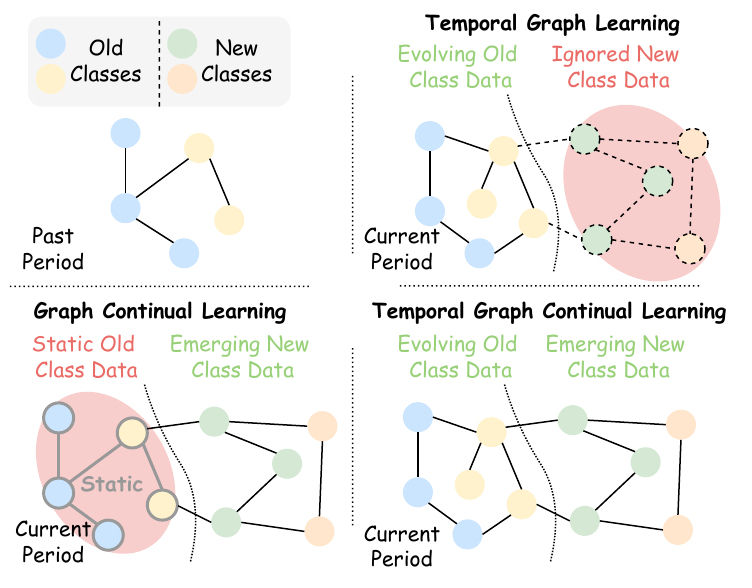}
  \vspace{-10px}
  \caption{The differences in temporal graph learning (TGL), graph continual learning (GCL) and temporal graph continual learning (TGCL). 
  At a new period,
  TGL assumes no data of new classes appear, 
  while GCL assumes static old-class data.
  TGCL holds neither of these assumptions, thus is more suitable to real-life temporal graphs.}
  \label{fig: pipeline difference}
  \vspace{-15px}
\end{figure}

While TGL methods are effective at classifying nodes, they face a significant limitation: they assume a static set of node classes, which does not reflect the dynamic reality of these environments.
This static assumption is illustrated in Fig.~\ref{fig: pipeline difference}, where node classes under the TGL setting remain unchanged over time. 
However, real-world scenarios often exhibit an open class setting, where new classes frequently emerge.
Like in social networks, new post topics continually arise, introducing new node classes into the system \cite{TGCL-OTGNet}. 
Due to the fixed class set assumption, adapting current TGL methods to this open class setting presents efficiency and effectiveness challenges. 
Updating the model for all classes becomes inefficient as the temporal graph grows, while fine-tuning the model for only new classes risks catastrophic forgetting \cite{survey-catastrophic-forgetting, survey-continual-learning-1, survey-continual-learning-3} of older classes, particularly when their data distribution diverges from past instances.

To address the issue of forgetting when fine-tuning TGL models, continual learning~\cite{survey-continual-learning-1, survey-continual-learning-3} provides a promising solution. 
Recently, several graph continual learning (GCL) methods have been proposed to preserve old-class knowledge by either regularizing model parameters associated with previous classes~\cite{GCL-TWP} or replaying representative subsets of old-class data~\cite{GCL-Dygrain, GCL-ERGNN, GCL-TrafficStream, GCL-generative-replay, GCL-SSM, TGCL-OTGNet}. 
However, existing GCL methods struggle to handle open-class temporal graphs.
The major limitation is that they assume the seen data will be static in the future, as shown in Fig.\ref{fig: pipeline difference}.
Such an assumption contradicts with the dynamic nature of temporal graphs, making the model become out-dated for future temporal graphs.


These limitations in current TGL and GCL methods make updating models for open-class temporal graphs a challenging problem, which we define as temporal graph continual learning (TGCL).
The \textbf{challenge} in TGCL is how to maintain both the effectiveness and recency of old-class knowledge while ensuring high efficiency.
To address this, we propose a selective learning method that identifies and learns representative subsets of old-class data in new temporal graphs, named \textit{\textbf{L}}earning \textit{\textbf{T}}owards the \textit{\textbf{F}}uture (LTF).
While subset selection is a common approach, detailed analysis of how well these subsets represent the full dataset remains limited, especially when learning from the entire dataset is impractical. To address this, we derive an upper bound on the error introduced by approximating the full dataset with a subset. We transform the upper bound into a subset selection objective to minimize this error. Additionally, guided by the upper bound, we design a regularization loss that aligns the embedding distribution of the selected subset with that of the full dataset to pursue better performance.
Our contribution can be summarized as follows:

\begin{itemize}[leftmargin=*]
\item 
We are among the first to investigate how to effectively and efficiently update a model on temporal graphs with emerging new-class data and evolving old-class data, which we term the temporal graph continual learning (TGCL) problem.
\vspace{-5px}

\item 
Selecting representative old-class subsets is crucial for addressing the TGCL problem. To achieve this, we define a selection objective that minimizes the upper-bound error on the old classes.
\vspace{-5px}

\item
The knowledge from the subsets is hard to generalize to the full old-class data.
We solve this problem by designing an efficient loss that aligns the distributions of the subset and the full data.
\vspace{-5px}


\item We conduct extensive experiments on real-world web data, Yelp, Reddit, and Amazon. The results show that our method is effective while ensuring high efficiency.
\end{itemize}


\section{Background}
\label{sec: related works}

Many real-world scenarios are modeled as temporal graphs \cite{survey-temporal-graph-1, survey-temporal-graph-2, survey-temporal-graph-3}, such as social networks and online shopping networks.
In this paper, the temporal graph
$G=(V, E, T, Y) \sim \mathcal{G}$ is a set of nodes $V$ with labels $Y$ connected by  time-stamped events $E$ happening among $V$ within the time period $T$. $G$ follows the distribution $\mathcal{G}$.
Each event $e = \{u_{t}, v_{t}, t\} \in E $ is an interaction (edge) between two nodes $u_{t}, v_{t} \in V$ at time $t \in T$.
$G$ can be equivalently expressed as $E$.
Each node $v_{t} \in V$ is related with time $t$ and has a time-dependent feature $\mathbf{x}_t$.

Suppose the temporal graph has evolved for $N$ time periods $\{T_1, T_2, ..., T_N\}$, and the corresponding temporal graphs are noted as $\{G_1, G_2, ..., G_N\}$ which follow the distributions $\{\mathcal{G}_1, \mathcal{G}_2, ..., \mathcal{G}_N\}$.
Each period has a new set of classes $\{Y_1, Y_2, ..., Y_N\}$, where $Y_i \cap Y_j = \emptyset, \forall i, j \leq N \text{ and } i \neq j$.
For simplicity, we note the old classes at $T_N$ as $Y_{old} = \cup \{Y_{n}\}_{n<N}$.
Corresponding to the node classes, $G_N$ can be separated into $G_N^{new}=(V_N^{new}, E_N^{new}, T_N, Y_N)$ and $G_N^{old}= (V_N^{old}, E_N^{old}, T_N, Y_{old})$,
where $G_N = G_N^{old} \cup G_N^{new}$, $V_N^{old}\cap V_N^{new}=\emptyset$ and $E_N^{old} \cap E_N^{new} \neq \emptyset$.
$E_N^{old}$ and $E_N^{new}$ are overlapping at the events connecting between $V_N^{old}$ and $V_N^{new}$.
It is worth noting that $G_N^{old}$ has the same set of classes as $G_{N-1}$, but the data distribution is different due to the temporal evolution.
The illustration on relationships among $G_{N-1}$, $G_{N}^{old}$ and $G_{N}^{new}$ and the notation summary are presented in Appendix ~\ref{appx: notations}.


\subsection{Temporal Graph Learning}
\label{ssec: tgl}

In recent years, many temporal graph learning methods are proposed \cite{survey-continual-learning-1, survey-continual-learning-3} to extract knowledge from the temporal graphs.
Under the fixed class set assumption, at a new period $T_N$, the TGL methods under the node classification task aims to minimize the classification error of the model $h$ on $G_N^{old}$, i.e. the part of $G_N$ with only $Y_{old}$.
Suppose that $h$ is a binary classification hypothesis (model) from the hypothesis space $\mathcal{H}$ with finite VC-dimension, the TGL objective is formulated as:
\begin{equation}
  \label{eq: tgl}
  \tilde{h}_N = \arg\min_{h \in \mathcal{H}} \epsilon(h | \mathcal{G}_N^{old}),
\end{equation}
where $\epsilon(h | \mathcal{G}):= \mathop{\mathbb{E}}_{v_{t} \in \mathcal{G}} [h(v_{t}) \neq f(v_t)]$ is the expected classification error of $h$ on any distribution $\mathcal{G}$,
and $f(\cdot)$ is an unknown deterministic function that gives the ground truth classification on each $v_{t}$.

Early TGL works \cite{survey-DGL1, survey-DGL2} integrate events of temporal graphs into a sequence of snapshots, which loses fine-grained continuous-time information.
Thus recent TGL methods preserve events as the basic training instances \cite{survey-temporal-graph-1}.
As the pioneers, JODIE \cite{TGL-JODIE} processes and updates the embeddings of each node based on their related events by using a recursive neural network \cite{survey-RNN}.
CTDNE \cite{TGL-CTDNE} and CAW \cite{TGL-CAW} aggregate the information through random walks on the events.
TGAT  \cite{TGL-TGAT}, TGN \cite{TGL-TGN} and TREND \cite{TGL-TREND} apply the GNN-like message-passing mechanism to capture the temporal and structural information at the same time.
More recently, there are also works using the multi-layer perceptrons \cite{TGL-GraphMixer} or transformers \cite{TGL-DyGFormer} to understand the temporal graphs.
Other than structure designs, temporal learning techniques like the temporal point process \cite{TGL-DyRep, TGL-TREND} are also integrated into the TGL methods to better capture the temporal dynamics.




\subsection{Graph Continual Learning}
\label{ssec: tgcl}

As new classes continuously emerge for real-life temporal graphs, how to efficiently learn new classes without forgetting the old knowledge \cite{survey-catastrophic-forgetting} becomes an important problem.
For Euclidean data like images, the forgetting issue has been addressed by many continual learning methods \cite{survey-continual-learning-1, survey-continual-learning-3}.
The common approaches include regularizing the model parameters 
, replaying the subsets of the old data 
, or adjusting the model parameters 
.
Recently, some GCL methods are trying to connect continual learning to dynamic graphs \cite{survey-tgcl}.
The objective of the GCL problem is:
\begin{equation}
	\label{eq: gcl}
	\tilde{h}_N = \arg\min_{h \in \mathcal{H}} \epsilon(h | \mathcal{G}_N^{new}) + \epsilon(h | \mathcal{G}_{N-1}),
\end{equation}
where $\mathcal{G}_N^{new}$ and $\mathcal{G}_{N-1}$ are distributions of $G_N^{new}$ and $G_{N-1}$, and the former term is for learning new-class knowledge while the latter one is for maintaining old-class knowledge.
To reduce the cost of learning $G_{N-1}$, most of the GCL methods try to use subsets of $G_{N-1}$ to approximate its error, where the subsets are selected by the node influence \cite{GCL-ERGNN} or the structural dependency \cite{GCL-TrafficStream, GCL-Dygrain, GCL-SSM}, or generated via auxiliary models \cite{GCL-generative-replay}.
TWP \cite{GCL-TWP} takes a different approach by preventing the important parameters for classification and message-passing from being modified.
SSRM \cite{GCL-SSRM} aligns distributions between old and new class data distributions for better performances.
However, these methods primarily target graph snapshots and are less effective for event-based temporal graphs~\cite{TGCL-OTGNet}. 
This gap is first addressed by OTGNet~\cite{TGCL-OTGNet}, which proposes to replay important and diverse triads~\cite{TGL-triad} and learn class-agnostic embeddings.

\begin{figure*}[!t]
  \centering
  \includegraphics[width=\linewidth]{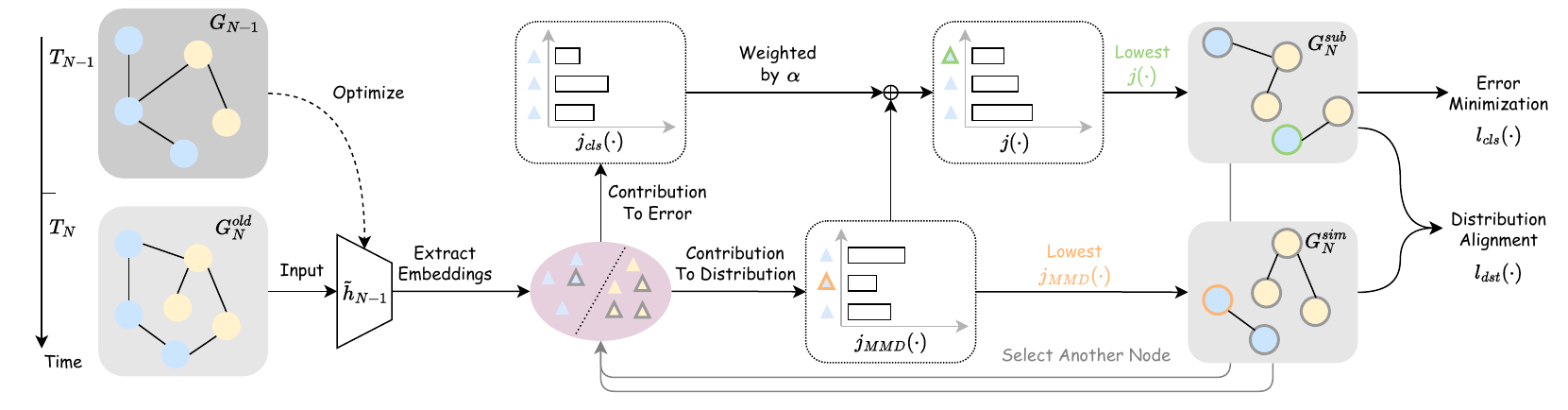}
  \vspace{-10px}
  \caption{The selective learning framework of LTF on old-class data. From $G_N^{old}$, $G_N^{sub}$ is greedily selected by having the lowest classification error $j_{cls}(\cdot)$ and distribution discrepancy $j_{MMD}(\cdot)$, while $G_N^{sim}$ is greedily selected only by the lowest $j_{MMD}(\cdot)$. 
  Afterwards, $G_N^{sub}$ is learned by minimizing the classification error and aligning the distribution with $G_N^{sim}$.}
  \label{fig: framework}
  \vspace{-10px}
\end{figure*}

\section{Methodology}
\label{sec: methods}

TGCL problem takes a more realistic temporal graph setting that considers both the appearance of new classes and the evolution of old-class data.
At a new period $T_N$, TGCL requires the model $h$ to learn the new classes from $G_N^{new}$ and maintain the old-class knowledge from $G_N^{old}$:
\begin{equation}
	\label{eq: error}
	\tilde{h}_N \! =\! \arg\! \min_{h\in \mathcal{H}} \! \epsilon(h | \mathcal{G}_N\!)
	\!= \! \arg\! \min_{h\in \mathcal{H}} \!
	\epsilon(h | \mathcal{G}_N^{new}\!)\! + \!
	\epsilon(h | \mathcal{G}_N^{old}\!).
\end{equation}
Compared with the TGL problem at Eq.~\eqref{eq: tgl}, the TGCL problem additionally minimizes $\epsilon(h | \mathcal{G}_N^{new}\!)$.
Besides, TGCL maintains more recent old-class knowledge from $G_N^{old}$, differing from $G_{N-1}$ in the GCL problem at Eq.~\eqref{eq: gcl}.

In this work, we focus on how to achieve both effectiveness and efficiency in minimizing $\epsilon(h | \mathcal{G}_N^{old})$, and directly minimize $\epsilon(h | \mathcal{G}_N^{new}\!)$ as most continual learning works do.
We follow a common strategy by selecting and learning a subset $G_N^{sub}$ of $G_N^{old}$.
To obtain an optimal performance, we first derive an upper bound on the error introduced by approximating $G_N^{old}$ with $G^{sub}_N$ in Sec.\ref{ssec: analysis}. 
We then transform this theoretical bound into a tractable optimization problem to facilitate subset selection in Sec.\ref{ssec: subgraph selection}. 
Lastly, this error is further minimized during learning by aligning the distribution of $G^{sub}_N$ with $G_N^{old}$, as detailed in Sec.\ref{ssec: regularization}.
The framework of LTF is illustrated in Fig.\ref{fig: framework}.

\subsection{Classification Error Upper-bound}
\label{ssec: analysis}

A small classification error $\epsilon(h | \mathcal{G}_N^{old})$ on $G_N^{old}$ is essential for the model effectiveness.
Selecting and learning $G_N^{sub} \subset G_N^{old}$ assumes that minimizing $\epsilon(h | \mathcal{G}_N^{sub})$ will also minimize $\epsilon(h | \mathcal{G}_N^{old})$.
While heuristics can help align these errors, a theoretical analysis connecting them is lacking.
We address this gap using domain adaptation theory \cite{related-domain-adaptation,related-domain-adaptation-2010} and show that $\epsilon(h | \mathcal{G}_N^{sub})$ can approximate $\epsilon(h | \mathcal{G}_N^{old})$ with upper-bounded additional error.

Based on Lemma 3 from \cite{related-domain-adaptation-2010} (see Appendix \ref{appx: theory proof}), the classification disagreement of any two models $h$ and $h'$ on any two data distributions is upper-bounded by the discrepancy between those two distributions.
Then, the upper-bound on the classification error of $\mathcal{G}_N^{old}$ can be derived as the following theorem:


\begin{theorem}
	\label{theorem: upper-bound}
	Let $\mathcal{G}_N^{old}, \mathcal{G}_N^{sub}$ be the distributions of $G_N^{old}$ and $G_N^{sub}$.
	Let $h \in \mathcal{H}$ be a function in the hypothesis space $\mathcal{H}$ and 
	$\tilde{h}^{sub}_N$ be the function optimized on $\mathcal{G}_N^{sub}$.
	The classification error on $\mathcal{G}_N^{old}$ then has the following upper bound:
	\begin{align}
		\label{eq: upper bound}
		\min_{h\in \mathcal{H}}
		\epsilon(h | \mathcal{G}_N^{old}) 
		& \leq
		\min_{h,\ \mathcal{G}_N^{sub}}
		\ 
		\epsilon(\tilde{h}^{sub}_N | \mathcal{G}_N^{old}) \\ 
		& 
        + 
		\frac{1}{2}d_{\mathcal{H}\Delta \mathcal{H}}(\mathcal{G}_N^{old}, \mathcal{G}_N^{sub})
        +
		\epsilon(h, \tilde{h}^{sub}_N | \mathcal{G}_N^{sub}), \nonumber
	\end{align}
	where $d_{\mathcal{H}\Delta \mathcal{H}}(\mathcal{G}_a, \mathcal{G}_b)=2\sup _{h\in \mathcal{H}\Delta \mathcal{H}} |Pr_{v_{t}\sim \mathcal{G}_a}[h(v_{t})=1]-Pr_{v_{t}\sim \mathcal{G}_b}[h(v_{t})=1]|$ measures the $\mathcal{H}\Delta \mathcal{H}$ divergence between the distributions $ \mathcal{G}_a$ and $\mathcal{G}_b$, and $\epsilon(h, h'|\mathcal{D}) := \mathop{\mathbb{E}}_{x \in \mathcal{D}} [h(x) \neq h'(x)]$ is the expected prediction differences of $h$ and $h'$ on $\mathcal{D}$.
\end{theorem}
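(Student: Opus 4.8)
The plan is to reduce the statement to the classical domain-adaptation bound by two elementary steps: a triangle inequality for the disagreement quantity, followed by the divergence lemma cited immediately before the theorem. The key observation that makes everything line up is that the expected error $\epsilon(h \mid \mathcal{G})$ is exactly the disagreement $\epsilon(h, f \mid \mathcal{G})$ between $h$ and the ground-truth labeling $f$. Hence every object appearing in the bound is an instance of the symmetric quantity $\epsilon(\cdot, \cdot \mid \cdot)$, and since this equals the probability of the symmetric difference of two 0-1 error events, it satisfies the triangle inequality.

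First I would fix an arbitrary $h \in \mathcal{H}$ together with an arbitrary candidate subset distribution $\mathcal{G}_N^{sub}$ (on which $\tilde{h}^{sub}_N$ is the optimized model), and rewrite $\epsilon(h \mid \mathcal{G}_N^{old}) = \epsilon(h, f \mid \mathcal{G}_N^{old})$. Inserting $\tilde{h}^{sub}_N$ via the triangle inequality splits this into $\epsilon(h, \tilde{h}^{sub}_N \mid \mathcal{G}_N^{old}) + \epsilon(\tilde{h}^{sub}_N, f \mid \mathcal{G}_N^{old})$, where the second summand is precisely $\epsilon(\tilde{h}^{sub}_N \mid \mathcal{G}_N^{old})$, namely the first term of the claimed bound. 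This term depends only on $\mathcal{G}_N^{sub}$ (through $\tilde{h}^{sub}_N$) and not on the free variable $h$.

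Next I would transfer the remaining disagreement $\epsilon(h, \tilde{h}^{sub}_N \mid \mathcal{G}_N^{old})$ from the old distribution onto the subset distribution. Because $h$ and $\tilde{h}^{sub}_N$ both lie in $\mathcal{H}$, the event that they disagree is decided by a classifier in $\mathcal{H}\Delta\mathcal{H}$, so the cited lemma applies and yields $\epsilon(h, \tilde{h}^{sub}_N \mid \mathcal{G}_N^{old}) \le \epsilon(h, \tilde{h}^{sub}_N \mid \mathcal{G}_N^{sub}) + \frac{1}{2} d_{\mathcal{H}\Delta\mathcal{H}}(\mathcal{G}_N^{old}, \mathcal{G}_N^{sub})$. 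Substituting this back reproduces exactly the three summands on the right-hand side of the theorem for the fixed pair $(h, \mathcal{G}_N^{sub})$.

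Finally, since the resulting chain of inequalities holds for \emph{every} $h$ and \emph{every} $\mathcal{G}_N^{sub}$, and since $\min_{h \in \mathcal{H}} \epsilon(h \mid \mathcal{G}_N^{old}) \le \epsilon(h \mid \mathcal{G}_N^{old})$ for each individual $h$, I would take the minimum over both $h$ and $\mathcal{G}_N^{sub}$ on the right-hand side to obtain the stated bound. I expect the only delicate point to be this last bookkeeping of the minimizations: one must argue that the left-hand minimum over $\mathcal{H}$ is dominated by the right-hand expression for every choice of the pair, so that the joint minimum over $(h, \mathcal{G}_N^{sub})$ legitimately bounds it. The two inequality steps themselves are routine once the error is recast as a disagreement against $f$.
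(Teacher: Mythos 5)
Your proposal is correct and follows essentially the same route as the paper's proof: a triangle inequality on the disagreement pseudometric to insert $\tilde{h}^{sub}_N$, followed by Lemma~3 of Ben-David et al.\ to transfer the residual disagreement from $\mathcal{G}_N^{old}$ to $\mathcal{G}_N^{sub}$ at a cost of $\frac{1}{2}d_{\mathcal{H}\Delta\mathcal{H}}$, and finally the observation that the pointwise inequality over all pairs $(h,\mathcal{G}_N^{sub})$ survives minimization on both sides. Your explicit recasting of $\epsilon(h\mid\mathcal{G})$ as $\epsilon(h,f\mid\mathcal{G})$ to justify the triangle inequality is a slightly more careful rendering of a step the paper only cites, but the argument is the same.
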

The proof is given in Appendix \ref{appx: theory proof}. 
Following Theorem~\ref{theorem: upper-bound}, 
there are
three 
criteria  
to ensure that $h$ achieves a lower error on $G_N^{old}$ 
by finding a suitable $G_N^{sub}$:
\vspace{-10px}
\begin{itemize}[leftmargin=*]
\item Small error $\epsilon(\tilde{h}^{sub}_N|\mathcal{G}_N^{old})$ 
indicates that 
$\tilde{h}^{sub}_N$ learned on subset $G_N^{sub}$ 
is also predictive on the entire old data $G_N^{old}$;
\vspace{-5px}

\item Small distribution difference $d_{\mathcal{H}\Delta \mathcal{H}}(\mathcal{G}_N^{old}, \mathcal{G}_N^{sub})$
indicates that the subset $G_N^{sub}$ is diverse enough to represent the entire old class data $G_N^{old}$;
\vspace{-5px}

\item Small error $\epsilon(h, \tilde{h}^{sub}_N | \mathcal{G}_N^{sub})$
indicates that $h$ classifies $\mathcal{G}_N^{sub}$ similarly as $\tilde{h}^{sub}_N$, which guarantees that the knowledge of $\tilde{h}^{sub}_N$ can be transferred to $h$.
\end{itemize}
\vspace{-10px}

By relaxing $\epsilon(h | \mathcal{G}_N^{old})$ to the upper bound in Eq.~\eqref{eq: upper bound}, we convert the original problem
into selecting subset $G^{sub}_N$ (Sec.~\ref{ssec: subgraph selection})
and 
updating the TGCL model $h$ (Sec.\ref{ssec: regularization}). 

\subsection{Subset Selection}
\label{ssec: subgraph selection}


Theorem~\ref{theorem: upper-bound} specifies three criteria for selecting $G_N^{sub}$ from $G_N^{old}$.
Since $h$ is unknown during selection, third criterion is omited in following analysis.
The remaining two criteria for a representative $G_N^{sub}$ are: 
1) a low $\epsilon(\tilde{h}^{sub}_N|\mathcal{G}_N^{old})$, reflecting strong predictive performance on $G_N^{old}$, and 2) a low $d_{\mathcal{H}\Delta \mathcal{H}}(\mathcal{G}_N^{old}, \mathcal{G}_N^{sub})$, ensuring close alignment with the distribution of $G_N^{old}$.
Next, we introduce how we transform these two criteria into a tractable optimization problem for selecting representative $G_N^{sub}$.

\paragraph{\textbf{Base Algorithm.}} 

In minimizing $\epsilon(\tilde{h}^{sub}_N|\mathcal{G}_N^{old})$, it is infeasible to obtain $\tilde{h}_N^{sub}$ from each possible $\mathcal{G}_N^{sub}$ for selection.
Therefore we approximate $\tilde{h}_N^{sub}$ with $\tilde{h}_{N-1}$ that is optimized on $\mathcal{G}_{N-1}$.
This is a reasonable approximation, as $\tilde{h}_{N-1}$ is readily available and $G_{N-1}$ closely resembles the full data $G_N^{old}$ by sharing the same class set $Y_{old}$ and being temporally proximate.
With $\tilde{h}_N^{sub}$ substitued by $\tilde{h}_{N-1}$, based on the inequality in \cite{related-domain-adaptation-2010}, $\epsilon(\tilde{h}^{sub}_N|\mathcal{G}_N^{old})$ is expanded to:
$\epsilon(\tilde{h}^{sub}_N|\mathcal{G}_N^{old}) \leq \epsilon(\tilde{h}^{sub}_N, \tilde{h}_{N-1}|\mathcal{G}_N^{old}) + \epsilon(\tilde{h}_{N-1}|\mathcal{G}_N^{old}).$
As $\epsilon(\tilde{h}_{N-1}|\mathcal{G}_N^{old})$ is unrelated to the subset, we focus on minimizing $\epsilon(\tilde{h}^{sub}_N, \tilde{h}_{N-1}|\mathcal{G}_N^{old})$, which requires aligning $\tilde{h}_N^{sub}$ with $\tilde{h}_{N-1}$.
Because $\tilde{h}_{N}^{sub}$ depends on $G_{N}^{sub}$, we select $G_N^{sub}$ that minimizes $\epsilon(\tilde{h}_{N-1}|\mathcal{G}_N^{sub})$ to match the behaviors of $\tilde{h}_{N}^{sub}$ and $\tilde{h}_{N-1}$.
This simplifies our selection problem to:
\begin{equation}
	\label{eq: theorem selection objective}
\tilde{\mathcal{G}}_N^{sub} = \arg\min_{\mathcal{G}_N^{sub}}\epsilon(\tilde{h}_{N-1}|\mathcal{G}_N^{sub}) + d_{\mathcal{H}\Delta \mathcal{H}}(\mathcal{G}_N^{old}, \mathcal{G}_N^{sub}).
\end{equation}
Due to limited available observations in reality,
we transform the distribution-level error $\epsilon(\tilde{h}_{N-1}|\mathcal{G}_N^{sub})$ into the empirical error 
$\hat{\epsilon}(\cdot)$ on the finite subest $G_N^{sub}$:
\begin{align}
	\label{eq: error_estimation}
	\hat{\epsilon}(\tilde{h}_{N-1}|G_N^{sub})
	\!= \!\frac{1}{|G_N^{sub}|}\! \sum_{(v_{t}, y) \in G_N^{sub}}\! l_{cls} (\tilde{h}_{N-1}(v_{t}), y),
\end{align}
where $l_{cls}(\cdot)$ is the classification error like the mean square error.
Similarly, we estimate $d_{\mathcal{H}\Delta \mathcal{H}}(\mathcal{G}_N^{old}, \mathcal{G}_N^{sub})$ by the square of the mean maximum distribution (MMD)~\cite{related-MMD-inequality} on the finite sets $G_N^{old}$ and $G_N^{sub}$:
\begin{align}
	\label{eq: MMD square}
	& \hat{d}_{MMD}^2 (G_a,\! G_b) = \frac{1}{|G_a|^2} \!\sum_{v_{t}, u_{t} \in G_a}\! k(v_{t},\! u_{t}) 
	\\
	& 
	\!-\! \frac{2}{|G_a| |G_b|}
	\!
	\sum_{v_{t} \in G_a, u_{t} \in G_b} 
	\!
	k(v_{t},\! u_{t}) 
	\!+ \!
	\frac{1}{|G_b|^2}
	\!
	\sum_{v_{t}, u_{t} \in G_b}
	\!
	k(v_{t},\! u_{t}),
	\nonumber
\end{align}
where 
$k(\cdot,\cdot)$ is the kernel function, 
$G_a$ and $G_b$ are adopted for simplifying notations.
To evaluate the kernel values, we take a common practice to replace the raw data by their embeddings~\cite{GCL-distribution,regularization-UDIL,survey-domain-adaptation}, which are extracted by $\tilde{h}_{N-1}$ and noted as $\hat{d}_{MMD}^2(G_N^{sub}, G_N^{old}|\tilde{h}_{N-1})$.

With above estimations, the selection objective of $G_N^{sub}$ in Eq.\eqref{eq: theorem selection objective} is transformed into:
\begin{align}
	\label{eq: selection}
	\tilde{G}^{sub}_N = & \arg \min_{|G^{sub}_N| \leq m, G^{sub}_N \subset G_N^{old}} \alpha \hat{\epsilon}(\tilde{h}_{N-1} | G^{sub}_N) \\
  & + \hat{d}_{MMD}^2 (G_N^{old}, G^{sub}_N|\tilde{h}_{N-1}), \nonumber 
\end{align}
where $\alpha$ is a weight hyper-parameter and $m$ is the memory budget for $G_N^{sub}$.
A larger $m$ brings a better estimation but also increases the computation complexity.

\paragraph{\textbf{Greedy Algorithm.}}
Directly optimizing the selection objective in Eq.~\eqref{eq: selection} is infeasible due to its factorial time complexity.
However, it can be proven to be a monotone submodular function, allowing greedy optimization with a guaranteed approximation to the optimal solution.

The first term $ \hat{\epsilon}(\tilde{h}_{N-1} | G^{sub}_N)$ is linear to the classification error of each instance, thus it is directly monotone submodular.
Following the proof in \cite{related-MMD-minimization}, $\hat{d}_{MMD}^2 (G_N^{old}, G^{sub}_N|\tilde{h}_{N-1})$ is also monotone submodular function with respect to $G_N^{sub}$, 
provided $k(v_{t}, u_{t})$ satisfies $0 \leq k(v_{t}, u_{t}) \leq \nicefrac{k(v_t,v_t)}{(|G_N^{old}|^3-2|G_N^{old}|^2-2|G_N^{old}|-3)},\ \forall v_{t}, u_{t} \in G_N^{old},\  v_{t}\neq u_{t}$.
This requirement is met with a properly parameterized kernel, and we use the Radial Basis Function kernel~\cite{related-rbf} in practice.
Thus, Eq.~\eqref{eq: selection}, as a sum of two monotone submodular functions, is itself monotone submodular based on the theory in \cite{related-combinational-optimization}. 
Consequently, as per \cite{related-greedy-error}, Eq.\eqref{eq: selection} can be efficiently approximated by a greedy algorithm, achieving an error bound of $(1-\nicefrac{1}{e})$ relative to the optimal solution.

To implement the greedy algorithm, we derive the witness function $j(\cdot)$ to evaluate how adding one node to $G^{sub}_N$ affects the value of Eq.\eqref{eq: selection}.
The function $j(\cdot)$ is a summation of two separate witness functions $j_{cls}(\cdot)$ and $j_{MMD}(\cdot)$, corresponding to $\hat{\epsilon}(\tilde{h}_{N-1} | G^{sub}_N)$ and $\hat{d}^2_{MMD}(G_N^{old}, G_N^{sub})$.
Since the classification error is a summation over node-wise losses defined in Eq.\eqref{eq: error_estimation}, $j_{cls}(v_{t}) = l_{cls}(v_{t}, y | \tilde{h}_{N-1})$.
On the other hand, $j_{MMD}(\cdot)$ can be derived from Eq.~\eqref{eq: MMD square} as:
\begin{align}
\label{eq: witness function MMD}
j_{MMD}(v_{t}) = & \frac{2}{|G_N^{sub}|} \sum_{u_{t} \in G_N^{sub}} k(v_{t}, u_{t}|\tilde{h}_{N-1}) \\
& - \frac{2}{|G_N^{old}|} \sum_{u_{t} \in G_N^{old}} k(v_{t}, u_{t}|\tilde{h}_{N-1}), \nonumber
\end{align}
where $k(v_{t}, u_{t}|h)$ notes the kernel calculated by the node embeddings extracted from the model $h$.
Then, the overall witness function $j(\cdot)$ is expressed as:
\begin{align}
j(v_{t}) = \alpha j_{cls}(v_t) + j_{MMD}(v_{t}), \nonumber
\end{align}
where $\alpha$ is the same as in Eq.~\eqref{eq: selection}.
Afterwards, we greedily select the nodes with the smallest $j(\cdot)$ from $G_N^{old}$ to $G_N^{sub}$ untill the buffer is full.

\paragraph{\textbf{Cost Reduction.}}
During greedy selection, estimating $\hat{d}_{MMD}^2 (G_N^{old}, G^{sub}_N|\tilde{h}_{N-1})$ has a high computational complexity of $O((|G_N^{old}| + |G_N^{sub}|)^2)$, which limits its application to large data sets.
Thus, we propose to evenly partition $G_N^{old}$ into groups of size $p$, $|G_N^{old}| > p \gg m$, resulting in $W=\lceil |G_N^{old}|/p \rceil$ partitions. 
We then greedily select $\nicefrac{1}{W}$ of $G_N^{sub}$ from each partition and join them as the final subset.
The complexity of selecting $G_N^{sub}$ from each partition is reduced to $O((|G_N^{old}| + |G_N^{sub}|)^2/W^2)$.

Based on the triangle inequality of $d_{\mathcal{H}\Delta \mathcal{H}}(\cdot, \cdot)$ \cite{related-MMD-inequality}, 
this partition procedure enlarges the second term of Eq.~\eqref{eq: selection} to $d_{\mathcal{H}\Delta\mathcal{H}} (\mathcal{G}_N^{old}, \mathcal{G}_{N}^{sub}) \leq d_{\mathcal{H}\Delta\mathcal{H}} (\mathcal{G}_N^{old}, \mathcal{G}_{N,w}^{old}) + d_{\mathcal{H}\Delta\mathcal{H}} (\mathcal{G}_{N,w}^{old}, \mathcal{G}_{N,w}^{sub})$
for each partition $G_{N,w}^{old} \sim \mathcal{G}_{N,w}^{old}$.
To reduce this additional error, $\mathcal{G}_{N,w}^{old}$ should be similar to $\mathcal{G}_N^{old}$.
As the partitioned data can remain a large size for the subset selection, the random partition can well satisfy this requirement.

\subsection{Model Optimization}
\label{ssec: regularization}

After selecting an optimal subset from Sec.~\ref{ssec: subgraph selection}, 
we transform Eq.~\eqref{eq: upper bound} into a concrete learning objective to train an effective model $h$ with $G_N^{sub}$.
Because $\tilde{h}_N^{sub}$ is determined after the subset is selected and $\mathcal{G}_N^{old}$ is fixed, the first term $\epsilon (\tilde{h}_N^{sub}|\mathcal{G}_N^{old})$ of Eq.~\eqref{eq: upper bound} cannot be further optimized and is omitted.
We minimize $d_{\mathcal{H}\Delta\mathcal{H}} (\mathcal{G}_N^{old}, \mathcal{G}_{N}^{sub})$ by enclosing the embedding distributions of both data sets extracted by $h$ \cite{related-domain-adaptation-2010,GCL-distribution,regularization-UDIL,survey-domain-adaptation}, i.e., minimize $d_{\mathcal{H}\Delta\mathcal{H}} (\mathcal{G}_N^{old}, \mathcal{G}_{N}^{sub} | h)$.
By replacing the population terms with their estimations, the objective of learning $G_N^{sub}$ is $\hat{\epsilon}(h, \tilde{h}_N^{sub}|G_N^{sub}) + d_{MMD} (G_N^{old}, G_{N}^{sub}|h)$.

As we avoid learning $G_N^{old}$ for efficiency reason, we substitute $G_N^{old}$ with its similarly distributed subset $G_N^{sim} \subset G_N^{old}$, $\mathcal{G}_N^{sim} \approx \mathcal{G}_N^{old}$.
To satisfy this requirement, $G_N^{sim}$ is selected by solely minimizing its distribution discrepancy with $G_N^{old}$:
\begin{align}
	\label{eq: core selection}
	\tilde{G}^{sim}_N
	= \arg \min_{\substack{|G^{sim}_N| \leq m' \\ G^{sim}_N \subset G_N^{old}}} \hat{d}^2_{MMD}(G^{sim}_N, G_N^{old} | \tilde{h}_{N-1}),
\end{align}
where 
$m'$ is the memory budget for $G^{sim}_N$.
A larger $m'$ improves the estimation but also increases the computation complexity.
$G^{sim}_N$ can be selected by greedily finding the nodes with the lowest $j_{MMD}(\cdot)$ defined in Eq.~\eqref{eq: witness function MMD}.
We further reduce the selection cost of $G^{sim}_N$  by data partitioning, similar to $G^{sub}_N$.
After substituting $G_N^{old}$ with $G_N^{sim}$, the learning objective is transformed to:
\begin{equation}
	\label{eq: learning objective}
	l_{old}(G^{sim}_N, G^{sub}_N | h) = \hat{\epsilon}(h|G_N^{sub}) + \hat{d}_{MMD} (G_N^{sim}, G_{N}^{sub}|h),
\end{equation}
where $\hat{\epsilon}(h, \tilde{h}_N^{sub}|G_N^{sub})$ is written as $\hat{\epsilon}(h|G_N^{sub})$ since both terms are equivalent in making $h$ perform like $\tilde{h}_N^{sub}$.

In practice, the complexity of calculating $\hat{d}_{MMD}^2(G^{sim}_N, G^{sub}_N|h)$ is $O((|G^{sim}_N|+|G^{sub}_N|)^2)$, which is much higher than $O(|G^{sub}_N|)$ of the classification error calculation.
So that we further simplify its format to improve the efficiency.
To ensure that $G^{sim}_N$ is the target distribution of $G^{sub}_N$ during optimization, 
we stop the gradients of $G^{sim}_N$ from being back-propagated.
Following this stop gradient design, the first and third terms in $\hat{d}^2_{MMD}(\cdot)$ definition at Eq.~\eqref{eq: MMD square} are omitted, 
since $G^{sim}_N$ is not learned and the self-comparison within $G^{sub}_N$ is meaningless.
After this simplification, the complexity is reduced from $O((|G_N^{sim}| + |G_N^{sub}|)^2)$ to $O(|G_N^{sim}||G_N^{sub}|)$, and $\hat{d}_{MMD}^2(G^{sim}_N, G^{sub}_N|h)$ is optimized by:
\begin{align}
	l_{dst}(G^{sub}_N, G^{sim}_N | h)\! = \! - s \! \sum_{\substack{v_{t} \in G^{sub}_N \\ u_{t} \in G^{sim}_N}} k(v_{t},\! \mathrm{sg}(u_{t}) | h), \nonumber
\end{align}
where $s=\nicefrac{2}{|G^{sub}_N| \cdot |G^{sim}_N|}$ and $\mathrm{sg}(\cdot)$ means stopping the gradients from back-propagation.



Finally, by including the objective of learning $G^{new}_N$, our total objective of updating $h$ at $T_N$ is:
\begin{align}
 l_{tot} = \hat{\epsilon}(h|G_N^{new}) + \hat{\epsilon}(h|G_N^{sub}) + \beta l_{dst}(G^{sim}_N, G^{sub}_{N} | h), \nonumber
\end{align}
where $\beta$ is the hyper-parameter weighting the distribution regularization importance.
The pseudo-code of our overall framework is presented in Algorithm~\ref{alg} at Appendix~\ref{appx: Pesudo Code}.

\section{Experiments}
\label{sec: experiment}



\subsection{Experiment Setup}
\label{ssec: setup}

\begin{table}[!t]
	\caption{Data statistics}
	\label{tab: dataset}
	\vspace{-10px}
	\centering
    \renewcommand{\arraystretch}{0.8}
  \begin{tabular}{lrrr}
    \toprule
    Data Set             & \multicolumn{1}{c}{Yelp} & \multicolumn{1}{c}{Reddit} & \multicolumn{1}{c}{Amazon} \\ \midrule
    \# Nodes             & 19,918                   & 13,106                     & 84,605                     \\ \midrule
    \# Events            & 2,321,186                & 310,231                    & 875,563                    \\ \midrule
    \# Timespan / Period & 1 year                   & 20 days                    & 24 days                    \\ \midrule
    \# Periods           & 5                        & 3                          & 3                          \\ \midrule
    \# Classes / Period  & 3                        & 5                          & 3                          \\ \midrule
    \# Total Classes     & 15                       & 15                         & 9                          \\ \bottomrule
    \end{tabular}
	\vspace{-15px}
\end{table} 

\paragraph{\textbf{Data Set.}}
\label{pg: data set}
\footnote{Our code and data are available at \url{https://anonymous.4open.science/r/TGCLandLTF}.}
We evaluate our method using three real-world datasets: Yelp \cite{dataset-yelp}, Reddit \cite{dataset-reddit}, and Amazon \cite{dataset-amazon}. Yelp, a business-to-business temporal graph from 2014 to 2019, treats businesses in the same category as nodes of the same class, with user interactions creating events. The graph is divided into five periods, each representing a year, with three new business categories introduced each year. Reddit, a post-to-post temporal graph, treats subreddit topics as classes and posts as nodes. User comments create events, with every 24 days forming a period and five new subreddits introduced each period. Amazon is constructed similarly to Yelp, with 20-day periods and three new businesses per period. 
The temporal graph transformation mechanism is similar to OTGNet \cite{TGCL-OTGNet}, but adapted to our unique problem definition. 
Dataset statistics are summarized in Tab.~\ref{tab: dataset}, with additional details in the Appendix~\ref{appx: data set}.

\paragraph{\textbf{Backbone Model.}}
\label{pg: backbone}

As LTF is agnostic to TGL model designs, we select the classic model 
TGAT \cite{TGL-TGAT} and the state-of-the-art model DyGFormer \cite{TGL-DyGFormer} 
as our backbone models.
TGAT uses the self-attention mechanism to aggregate the temporal neighbor information and embed the nodes.
DyGFormer applies the transformer structure and uses the structural encoding to embed the nodes.

\begin{table*}[!t]
	\caption{The comparison between LTF and the baselines methods. 
	The best and second best results are noted in \textbf{Bold} and \underline{Underline}. 
	Joint and Finetune are excluded from the notations.}
	\label{tab: main exp}
	\setlength{\tabcolsep}{1.4pt}
	\centering
	\small
	\vspace{-10px}
	\resizebox{1.0\linewidth}{!}{
		\begin{tabular}{l|ccccccccc|ccccccccc}
			\toprule
			\multicolumn{1}{c|}{\multirow{3}{*}{Method}} & \multicolumn{9}{c|}{TGAT}                                                                                                                                                                                                              & \multicolumn{9}{c}{DyGFormer}                                                                                                                                                                                                     \\ \cmidrule{2-19} 
			\multicolumn{1}{c|}{}                        & \multicolumn{3}{c|}{Yelp}                                                             & \multicolumn{3}{c|}{Reddit}                                                      & \multicolumn{3}{c|}{Amazon}                                 & \multicolumn{3}{c|}{Yelp}                                                        & \multicolumn{3}{c|}{Reddit}                                                      & \multicolumn{3}{c}{Amazon}                                  \\ \cmidrule{2-19} 
			\multicolumn{1}{c|}{}                        & AP$\uparrow$       & AF$\downarrow$     & \multicolumn{1}{c|}{Time$\downarrow$}       & AP$\uparrow$       & AF$\downarrow$     & \multicolumn{1}{c|}{Time$\downarrow$}  & AP$\uparrow$       & AF$\downarrow$     & Time$\downarrow$  & AP$\uparrow$       & AF$\downarrow$     & \multicolumn{1}{c|}{Time$\downarrow$}  & AP$\uparrow$       & AF$\downarrow$     & \multicolumn{1}{c|}{Time$\downarrow$}  & AP$\uparrow$       & AF$\downarrow$     & Time$\downarrow$  \\ \midrule
			Joint                                        & 0.0810             & ---                 & \multicolumn{1}{c|}{58.37}                  & 0.1378             & ---                 & \multicolumn{1}{c|}{50.50}             & 0.1477             & ---                 & 128.71            & 0.0813             & ---                 & \multicolumn{1}{c|}{95.11}             & 0.1256             & ---                 & \multicolumn{1}{c|}{70.64}             & 0.1500             & ---                 & 177.38            \\
			Finetune                                     & 0.0141             & 0.0843             & \multicolumn{1}{c|}{9.11}                   & 0.0312             & 0.1550             & \multicolumn{1}{c|}{14.93}             & 0.0340             & 0.1408             & 65.81             & 0.0172             & 0.0800             & \multicolumn{1}{c|}{14.43}             & 0.0360             & 0.1433             & \multicolumn{1}{c|}{20.58}             & 0.0551             & 0.1517             & 88.34             \\ \midrule
			LwF               & 0.0209             & 0.0620             & \multicolumn{1}{c|}{13.90}                  & 0.0439             & 0.1091             & \multicolumn{1}{c|}{23.67}             & 0.0303             & 0.1024             & 102.92            & 0.0399             & 0.0386             & \multicolumn{1}{c|}{26.03}             & 0.0469             & 0.0944             & \multicolumn{1}{c|}{37.53}             & 0.0763             & 0.0856             & 155.03            \\
			EWC                & 0.0443             & 0.0408             & \multicolumn{1}{c|}{\textbf{\textbf{9.19}}} & 0.0467             & 0.1384             & \multicolumn{1}{c|}{\textbf{14.95}}    & 0.0524             & 0.1152             & \textbf{68.37}    & \underline{0.0601} & 0.0295             & \multicolumn{1}{c|}{\textbf{14.24}}    & 0.0521             & 0.1046             & \multicolumn{1}{c|}{\textbf{20.20}}    & 0.1005             & 0.0832             & \textbf{89.32}    \\
			iCaRL                    & 0.0607             & \underline{0.0198} & \multicolumn{1}{c|}{\underline{11.57}}      & 0.0602             & 0.0860             & \multicolumn{1}{c|}{19.22}             & 0.0699             & 0.0794             & 70.03             & 0.0558             & \underline{0.0214} & \multicolumn{1}{c|}{18.31}             & \underline{0.0917} & \underline{0.0248} & \multicolumn{1}{c|}{26.34}             & 0.0945             & 0.0775             & 92.36             \\
			ER                          & 0.0521             & 0.0332             & \multicolumn{1}{c|}{11.63}                  & 0.0622             & 0.0783             & \multicolumn{1}{c|}{\underline{19.07}} & 0.0799             & 0.0617             & \underline{69.06} & 0.0546             & 0.0276             & \multicolumn{1}{c|}{18.49}             & 0.0771             & 0.0386             & \multicolumn{1}{c|}{26.65}             & 0.1026             & 0.0650             & \underline{92.11} \\
			SSM                           & 0.0552             & 0.0232             & \multicolumn{1}{c|}{11.82}                  & 0.0308             & 0.1203             & \multicolumn{1}{c|}{82.99}             & 0.0723             & 0.0912             & 145.27            & 0.0560             & 0.0235             & \multicolumn{1}{c|}{\underline{18.27}} & 0.0723             & 0.0641             & \multicolumn{1}{c|}{\underline{26.09}} & \underline{0.1063} & \underline{0.0568} & 92.15             \\
			OTGNet*                   & \underline{0.0648} & 0.0236             & \multicolumn{1}{c|}{316.15}                 & \underline{0.0868} & \underline{0.0518} & \multicolumn{1}{c|}{49.42}             & \underline{0.1031} & \underline{0.0459} & 709.49            & ---                 & ---                 & \multicolumn{1}{c|}{--- }               & ---                 & ---                 & \multicolumn{1}{c|}{--- }               & ---                 & ---                 & ---                \\
			URCL                   & 0.0562 & 0.0303             & \multicolumn{1}{c|}{11.57}                 & 0.0726 & 0.0649 & \multicolumn{1}{c|}{20.13}             & 0.0915 & 0.0431 & 70.32           & 0.0584                 & 0.0216                 & 20.13               & 0.0902                 & 0.0284                 & 27.58               & 0.1089                 & 0.0566                 & 93.43                \\
			LTF                                          & \textbf{0.0682}    & \textbf{0.0195}    & \multicolumn{1}{c|}{25.05}                  & \textbf{0.0871}    & \textbf{0.0474}    & \multicolumn{1}{c|}{39.16}             & \textbf{0.1110}    & \textbf{0.0165}    & 72.94    & \textbf{0.0681}    & \textbf{0.0096}    & \multicolumn{1}{c|}{51.80}             & \textbf{0.1134}    & \textbf{0.0081}    & \multicolumn{1}{c|}{58.56}             & \textbf{0.1253}    & \textbf{0.0383}    & 101.06            \\ \bottomrule
			\end{tabular}
	}
	\vspace{-10px}
\end{table*}

\begin{figure*}[!t]
	\centering
	\begin{minipage}{0.49\linewidth}
		\centering
		\includegraphics[width=\linewidth]{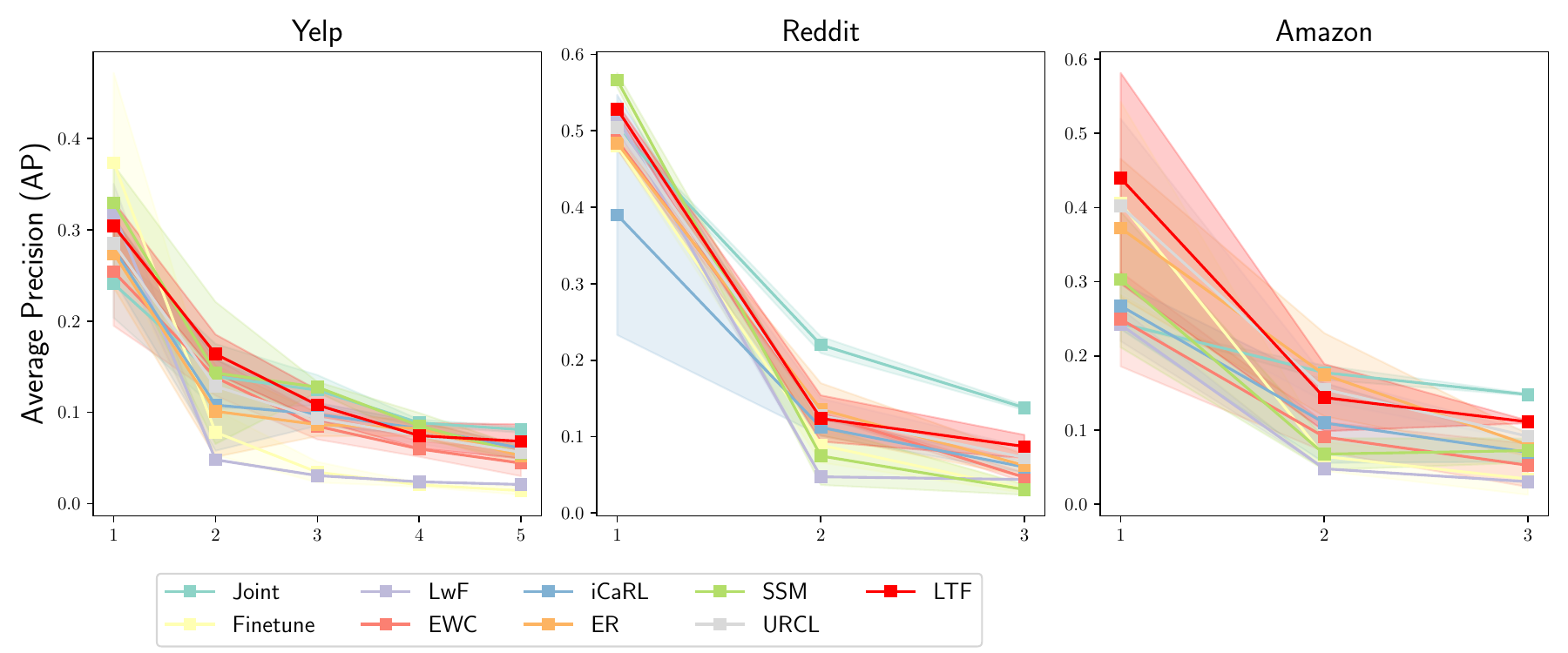}
		\vspace{-20px}
		\caption{The average precision (AP) of LTF and the baselines at each period based on TGAT.}
		\label{fig: learning process AP}
	\end{minipage}
	\hfill
	\begin{minipage}{0.49\linewidth}
		\centering
		\includegraphics[width=\linewidth]{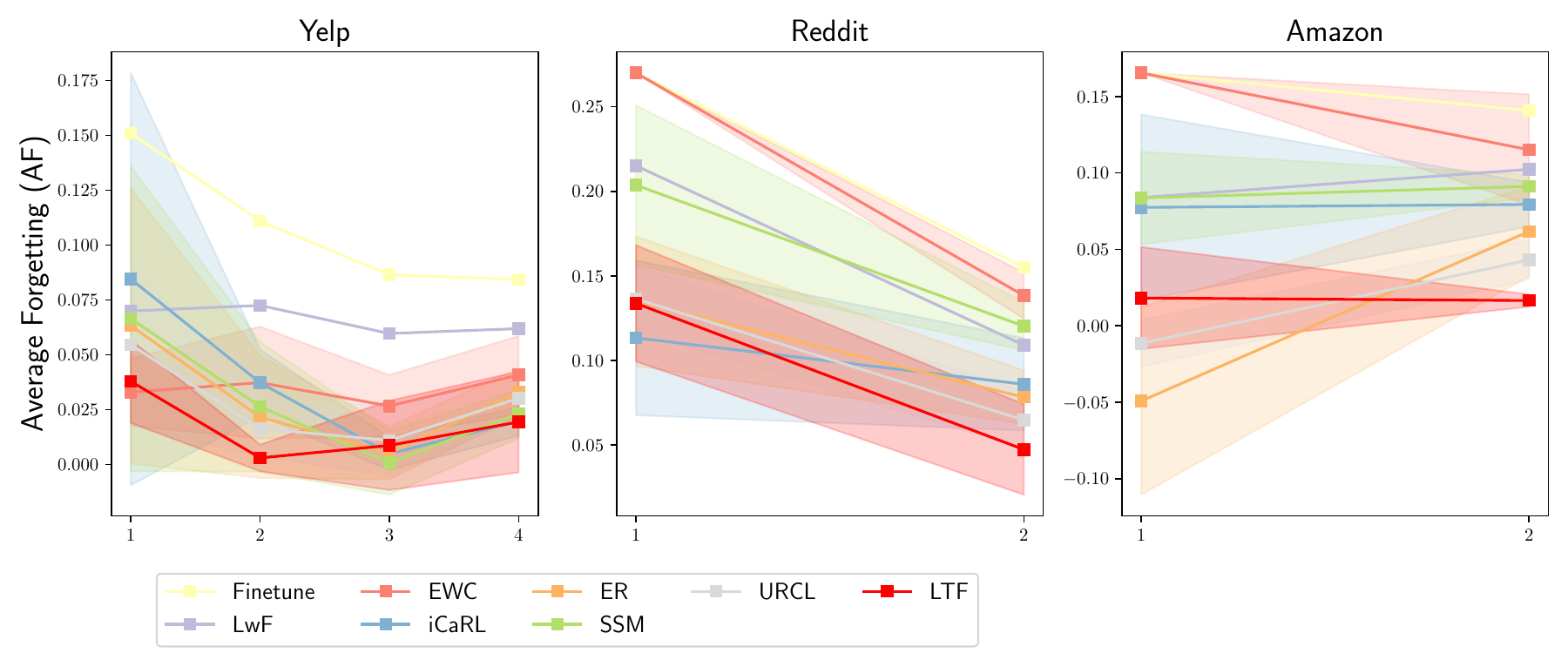}
		\vspace{-20px}
		\caption{The average forgetting (AF) of LTF and the baselines at each period based on TGAT.}
		\label{fig: learning process AF}
	\end{minipage}
	\vspace{-18px}
\end{figure*}

\paragraph{\textbf{Baselines.}}
\label{pg: baseline}

First, we select three classic continual learning models that are adaptable to the TGCL problem, which are EWC \cite{regularization-EWC}, LwF \cite{regularization-LwF} and iCaRL \cite{replay-icarl}. 
EWC and LwF use regularization losses to prevent forgetting the old class knowledge while not using the old class data.
iCaRL selects the representative old class data based on the closeness to the mean of the embeddings.
For the GCL methods,
we select the replay-based methods ER \cite{GCL-ERGNN}, SSM \cite{GCL-SSM}, OTGNet \cite{TGCL-OTGNet}, and URCL \cite{TGCL-URCL}.
Besides, the naive baselines of learning the full $G_N$ (Joint) and learning only the $G_N^{new}$ (Finetune) are included.
Joint is the upper-bound for performance with the lowest efficiency, while Finetune is the opposite.

\paragraph{\textbf{Evaluation Metric.}}
\label{pg: evaluation metric}
The average precision (AP) and average forgetting (AF) on each set of classes within a period are used to evaluate the model performance.
In $G_n$, there are $n$ sets of classes $\{Y_1, ..., Y_n\}$, and the model's precision on each of them is $P_{n,i}, \forall i \leq n$. Then AP at period $T_n$ is calculated as $AP_{n} := \frac{1}{n} \sum_{i=1}^{n} P_{n,i}$. 
To evaluate the forgetting issue at $T_n$, we use the precision difference between the current method and Joint ($P_{n,i}^{jnt}$) as the forgetting score for $Y_i$, which is $F_{n,i} := P_{n,i}^{jnt} - P_{n,i}$.
Then the AF at period $T_n$ is calculated as $AF_{n} := \frac{1}{n-1} \sum_{i=1}^{n-1} F_{n,i}$.
For simplicity, we omit the subscript $N$ for $AP_{N}$ and $AF_{N}$ as they reflect the final performance.
A higher value of AP is better, while a lower value of AF is better.
To evaluate the efficiency, the average training time per epoch (abbreviated as \textit{Time}) at the $N$'s update is recorded, as it accumulates the most data and is the most time-consuming.

\paragraph{\textbf{Implementation Details.}}

The experiments are run on Nvidia A30 GPU. 
The implementation of backbone models follow the code provided by DyGLib \cite{TGL-DyGFormer}.
TGAT contains two layers, and each layer has 4 attention heads.
DyGFormer contains two layers, and each layer has 4 attention heads.
The number of temporal neighbors is 10 and they are selected based on their freshness.
We use a single 2-layer perceptron to classify all the nodes of the period, which is called class-incremental setting in continual learning.
For all data sets, dropout rate is 0.4, learning rate is 0.00001, training epochs for each period is 100, and batch size is 600.
Early stop is applied when validate AP does not improve for 20 epochs.
For the selection based methods, 1000 events are selected for each class data at each period of Reddit and Amazon, and 500 for Yelp.
Additionally, for LTF, the size of $G^{sim}_N$ is set to 500 for all data sets, and data are partitioned to have around 10000 samples in each part.
The reported results are averaged over 3 random seeds.
For all data sets, each period is split into 80\% training, 10\% validation, and 10\% test.
The testing data are not seen in training and validation.

\subsection{Main Experiments}
\label{ssec: main experiment}

The overall comparison of LTF with other baselines is shown in Tab.\ref{tab: main exp}, with performance trends in Fig.\ref{fig: learning process AP} and Fig.~\ref{fig: learning process AF}. OTGNet modifies TGAT with a unique structure, making adaptation to DyGFormer non-trivial. Naive approaches like Joint and Finetune face efficiency and effectiveness issues; Joint requires long training times, while Finetune performs worse than other baselines.
Existing continual learning methods partially address TGCL, achieving higher APs (lower AFs) than Finetune with significantly lower time costs than Joint. Regularization-based methods are generally weaker than selection-based methods, highlighting the importance of data for updating old knowledge. However, a performance gap remains compared to Joint.
OTGNet improves subset selection by ensuring importance and diversity, outperforming other baselines but suffering from high time complexity. LTF, with its theoretical guarantees, achieves better performance than OTGNet with lower time costs. On DyGFormer, LTF outperforms OTGNet across all datasets. Full results with standard deviations are provided in Appendix ~\ref{appx: full results}.

\begin{table*}[!t]
	\centering
	\caption{Ablation study on the selecting and learning components of LTF. The applied components are noted with Y. 
	The best and second best results are noted in \textbf{Bold} and \underline{Underline}.
	}
	\label{tab: ablation study}
	\vspace{-10px}
	\setlength{\tabcolsep}{1.5pt}
	\small
	\resizebox{1.0\linewidth}{!}{
		\begin{tabular}{ccc|ccccccccc|ccccccccc}
			\toprule
			\multicolumn{3}{c|}{Component}                       & \multicolumn{9}{c|}{TGAT}                                                                                                                                                                                                      & \multicolumn{9}{c}{DyGFormer}                                                                                                                                                                                                  \\ \midrule
			\multicolumn{2}{c|}{Select}       & Learn            & \multicolumn{3}{c|}{Yelp}                                                       & \multicolumn{3}{c|}{Reddit}                                                     & \multicolumn{3}{c|}{Amazon}                                & \multicolumn{3}{c|}{Yelp}                                                       & \multicolumn{3}{c|}{Reddit}                                                     & \multicolumn{3}{c}{Amazon}                                 \\ \midrule
			Err. & \multicolumn{1}{c|}{Dist.} & $l_{dst}(\cdot)$ & AP$\uparrow$              & AF$\downarrow$     & \multicolumn{1}{c|}{Time$\downarrow$}                            & AP$\uparrow$              & AF$\downarrow$     & \multicolumn{1}{c|}{Time$\downarrow$}                            & AP$\uparrow$              & AF$\downarrow$     & Time$\downarrow$                            & AP$\uparrow$              & AF$\downarrow$     & \multicolumn{1}{c|}{Time$\downarrow$}                            & AP$\uparrow$              & AF$\downarrow$     & \multicolumn{1}{c|}{Time$\downarrow$}                            & AP$\uparrow$              & AF$\downarrow$     & Time$\downarrow$                            \\ \midrule
			Y    &                            &                  & 0.0438          & 0.0439 & \multicolumn{1}{c|}{\multirow{3}{*}{\textbf{11.79}}} & 0.0579          & 0.0736 & \multicolumn{1}{c|}{\multirow{3}{*}{\textbf{19.28}}} & \underline{0.1063}    & 0.0185 & \multirow{3}{*}{\textbf{67.77}} & 0.0467          & 0.0309 & \multicolumn{1}{c|}{\multirow{3}{*}{\textbf{18.51}}} & 0.0807          & 0.0407 & \multicolumn{1}{c|}{\multirow{3}{*}{\textbf{27.12}}} & 0.1203          & 0.0456 & \multirow{3}{*}{\textbf{90.18}} \\
				 & Y                          &                  & 0.0565          & 0.0322 & \multicolumn{1}{c|}{}                                & 0.0640          & 0.0695 & \multicolumn{1}{c|}{}                                & 0.0592          & 0.0684 &                                 & 0.0543          & 0.0266 & \multicolumn{1}{c|}{}                                & 0.0863          & 0.0350 & \multicolumn{1}{c|}{}                                & 0.1161          & 0.0465 &                                 \\
			Y    & Y                          &                  & \underline{0.0654}    & 0.0215 & \multicolumn{1}{c|}{}                                & \underline{0.0866}    & 0.0447 & \multicolumn{1}{c|}{}                                & 0.1004          & 0.0078 &                                 & \underline{0.0618}    & 0.0155 & \multicolumn{1}{c|}{}                                & \underline{0.0939}    & 0.0272 & \multicolumn{1}{c|}{}                                & \underline{0.1231}    & 0.0366 &                                 \\
			Y    & Y                          & Y                & \textbf{0.0682} & 0.0195 & \multicolumn{1}{c|}{\underline{25.05}}                     & \textbf{0.0871} & 0.0474 & \multicolumn{1}{c|}{\underline{39.16}}                     & \textbf{0.1110} & 0.0165 & \underline{72.94}                     & \textbf{0.0681} & 0.0096 & \multicolumn{1}{c|}{\underline{51.80}}                      & \textbf{0.1134} & 0.0081 & \multicolumn{1}{c|}{\underline{58.56}}                     & \textbf{0.1253} & 0.0383 & \underline{101.06}                    \\ \bottomrule
			\end{tabular}
    }
	\vspace{-10px}
\end{table*}

\begin{figure*}[!t]
	\centering
	\includegraphics[width=\linewidth]{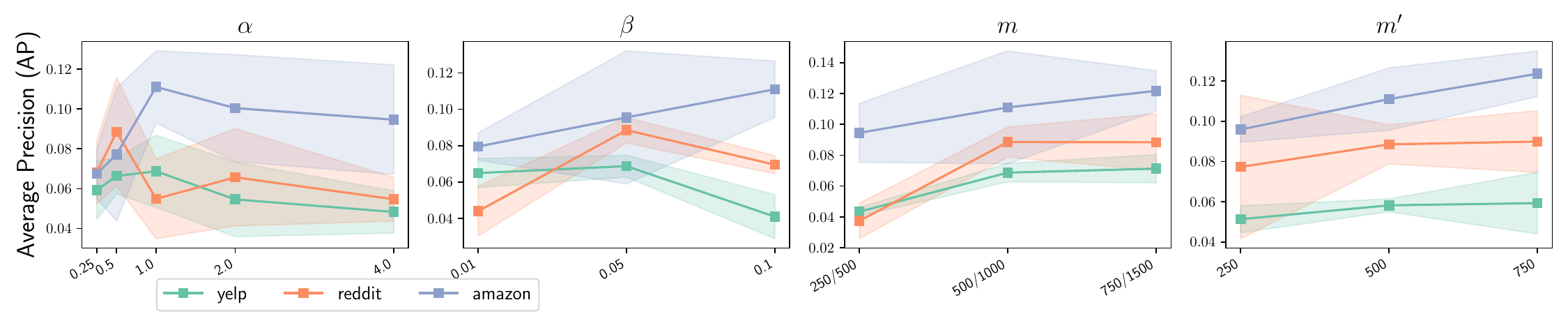}
	\vspace{-20px}
	\caption{Sensitivity on the key hyper-parameters based on TGAT.}
	\label{fig: sensitivity_TGAT}
	\vspace{-15px}
\end{figure*}

\subsection{Ablation Study}
\label{ssec: ablation study}

\paragraph{\textbf{Selection and Regularization Components.}} 
In Tab.\ref{tab: ablation study}, we evaluate the impact of each LTF component. 
The key terms of our selection objective in Eq.\eqref{eq: selection} are error $\hat{\epsilon}(\cdot|\cdot)$ and distribution $\hat{d}_{MMD}^2 (\cdot,\cdot)$, which are represented by Err. and Dist. respectively. 
We also analyze the effect of adding $l{dst}(\cdot)$ to the training objective.
The first two lines of Tab.~\ref{tab: ablation study} show that neither selection component alone is sufficient to find an effective subset. Yelp and Reddit rely more on distribution similarity, while Amazon benefits from lower error. Combining both improves performance across all datasets and backbones.
After selecting the most effective data, incorporating $l_{dst}(\cdot)$ further enhances performance. The additional optimization time is less significant for denser graphs, especially in Amazon, where new class data dominates. Full results are provided in Appendix ~\ref{appx: ablation study}.

\begin{table}[t]
	\centering
	\caption{Ablation study on the partition methods when selecting the subsets. The reported values are AP.}
	\label{tab: partition}
	\setlength{\tabcolsep}{2pt}
    \renewcommand{\arraystretch}{0.8}
	\vspace{-10px}
	\resizebox{1.0\linewidth}{!}{
	\begin{tabular}{l|ccc|ccc}
		\toprule
	   Partition & \multicolumn{3}{c|}{TGAT} & \multicolumn{3}{c}{DyGFormer}                                                      \\
	  \cmidrule{2-7}
    	Method & \multicolumn{1}{c}{Yelp} & \multicolumn{1}{c}{Reddit} & \multicolumn{1}{c|}{Amazon}  & \multicolumn{1}{c}{Yelp} & \multicolumn{1}{c}{Reddit} & \multicolumn{1}{c}{Amazon} \\
								   \midrule
		 Kmeans                            & 0.0598                   & 0.0754                     & 0.0929   & 0.0569                   & 0.0912                     & 0.1104                  \\
		 Hierarchical                            & 0.0613                   & 0.0792                     & 0.0897   & 0.0621                   & 0.0987                     & 0.1091                  \\
		 Random                            & \textbf{0.0682}                   & \textbf{0.0871}                     & \textbf{0.1110}        & \textbf{0.0681}                   & \textbf{0.1134}                     & \textbf{0.1253}              \\
		\bottomrule
		\end{tabular}
	}
	\vspace{-18px}
\end{table}

\paragraph{\textbf{Data Partition Approaches.}} 
We reduce selection complexity by partitioning old-class data and prove that an optimal partition should preserve distribution. Tab.~\ref{tab: partition} examines different partition methods. Selecting subsets without partitioning exceeds GPU memory limits (24G on Nvidia A30), making performance untrackable. Among intuitive methods, random partitioning is more effective, as k-means or Hierarchical clustering alter data distribution of each partition, conflicting with theorem requirements.
The study on the impact of partition size is included in the Appendix \ref{appx: partition}.



\subsection{Sensitivity Analysis}
\label{ssec: sensitivity analysis}

In Fig.~\ref{fig: sensitivity_TGAT}, we evaluate the sensitivity of our method over the essential hyper-parameters on TGAT. The results on DyGFormer are in Appendix \ref{appx: sensitivity}.
The empirical analysises are listed as follows, which apply to both backbones:
\vspace{-8px}
\begin{itemize}[leftmargin=*]
	\item \textbf{The impact of $\alpha$.} 
	$\alpha$ balances error and distribution in selecting $G_N^{sub}$ in Eq.~\eqref{eq: selection}. Across values [0.25, 0.5, 1, 2, 4], Yelp and Reddit favor smaller weights, while Amazon prefers larger ones, aligning with the ablation study. Optimal performance occurs around $\alpha = 1$ for all datasets, confirming the importance of both error and distribution in subset selection.
	\vspace{-5px}

	\item \textbf{The impact of $\beta$.} 
	$\beta$ controls the weight of $l_{dst}(\cdot)$ during subset learning. Despite favoring distribution in selection, Yelp and Reddit require a low $\beta$ for regularization, while Amazon needs a higher value. This suggests that distribution is crucial for generalizing subset knowledge, with $l_{dst}(\cdot)$ enhancing its effect during learning.
	\vspace{-5px}
	
	\item \textbf{Size $m$ of $G_N^{sub}$.} As $G_N^{sub}$ is the major carrier of the old class knowledge, its size $m$ is an important factor for the performance. Following the setup of the main experiment, the memory size is set to [250, 500, 750] for Yelp, and [500, 1000, 1500] for Reddit and Amazon. With the increase of memory size, the performance continuously improves, which is consistent with the intuition.
	\vspace{-5px}
	
	\item \textbf{Size $m'$ of $G_N^{sim}$.} $m'$ affects the quality of distribution approximation in Eq.~\ref{eq: core selection}. It can be seen Amazon requires a larger size, while 500 is effective enough for Yelp and Reddit. This is because the distribution of Amazon is more complex and requires a larger sample size to approximate.
\end{itemize}
\vspace{-10px}

\section{Conclusion}
\label{sec: conclusion}

This paper introduces a novel challenge of updating models in temporal graphs with open-class dynamics, termed temporal graph continual learning (TGCL). Unlike existing problems, TGCL necessitates adapting to both emerging new-class data and evolving old-class data, requiring model updates to be both effective and efficient. Our proposed Learning Towards the Future (LTF) method addresses TGCL by selectively learning from representative subsets of old classes, a strategy substantiated by theoretical analysis. Experiments on real-life datasets show that LTF effectively mitigates forgetting with minimal additional cost.

\clearpage
\newpage

\section*{Impact Statement}

This paper presents work whose goal is to advance the field of 
Machine Learning. There are many potential societal consequences 
of our work, none which we feel must be specifically highlighted here.


\bibliography{citation}
\bibliographystyle{icml2025}

\newpage
\appendix
\section{Important Notations}
\label{appx: notations}

The important notations used in the paper are summarized in Tab.~\ref{tab: notation} below.
The relationships among $G_{N-1}$, $G_{N}^{old}$ and $G_{N}^{new}$ are illustrated in Fig.~\ref{fig: notation illustration}.

\begin{table}[!th]
	\centering
	\caption{Important Notations}
	\label{tab: notation}
	\small
	\setlength{\tabcolsep}{3pt} 
    \resizebox{1.0\linewidth}{!}{
	\begin{tabular}{ll}
	  \toprule
	  Notation & Meaning \\ \midrule
	  $G \sim \mathcal{G}$ & Temporal graph $G$ that follows the distribution $\mathcal{G}$         \\ \midrule
	  $V, E, T, Y$ & Nodes $V$, events $E$, time period $T$ and class set $Y$ of $G$        \\ \midrule
	  $e\!=\!(u_t, v_t, t)$ & An event $e \in E$ that links nodes $u_t, v_t \in V$ at $t \in T$ \\ \midrule
	  $v_{t},\mathbf{x}_t$ & The node $v$ and its feature $x$ at time $t$ \\ \midrule
	  $T_N, T_n$ & The latest period $N$ and a past period $n < N$        \\ \midrule
	  $h \in \mathcal{H}$ & Model $h$ from hypothesis space $\mathcal{H}$ \\ \midrule
	  $\epsilon(\cdot), \hat{\epsilon}(\cdot)$ & Classification error on distribution and finite set \\ \midrule
	  $d_{\mathcal{H}\Delta\mathcal{H}}(\cdot)$ & Discrepancy between two distributions \\ \midrule
	  $\hat{d}_{MMD}(\cdot)$ & Estimated \textit{\textbf{M}}aximum \textit{\textbf{M}}ean \textit{\textbf{D}}iscrepancy \\
	  \bottomrule
	\end{tabular}
    }
\end{table}

\begin{figure}[!th]
	\includegraphics[width=\linewidth]{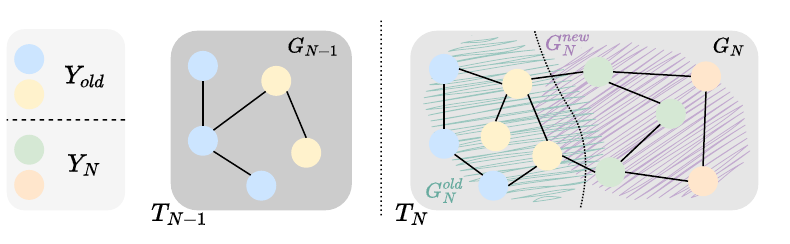}
	\caption{An illustration on the relationships among $G_{N-1}$, $G_{N}^{old}$ and $G_{N}^{new}$. $G_N^{old}$ and $G_N^{new}$ DO NOT overlap over nodes, but DO share the events connecting old and new class nodes.}
	\label{fig: notation illustration}
	\vspace{-14px}
\end{figure}

\section{Additional Discussion on Related Works}

Besides the related works in TGL discussed in the main content, we provide additional discussions on other TGL variants in this section.
Firstly, the efficiency issue in TGL is addressed by redesigning the training framework \cite{TGL-TGL}, sampling the representative nodes \cite{TGL-EARLY}, and integrating the random walk with temporal graph neural networks \cite{TGL-Zebra}.

Beyond simple temporal graphs, research has also explored temporal hyper-graphs~\cite{THG}, spatio-temporal graphs~\cite{TGL-STG-Survey}, and temporal knowledge graphs~\cite{TKG-1, TKG-2}. However, these methods typically assume a fixed set of node or entity labels and still encounter the forgetting issue when adapting to new classes, making them foundational but limited backbones for studying the TGCL problem.

There are also works addressing the OOD generalization problem in temporal graphs \cite{TGL-OOD1, TGL-OOD2}, which address a fundamentally different problem from ours. 
These works aim to improve model performance on test datasets that have distributions differing from the training datasets. In contrast, we focus on the continual learning problem that tackles the challenge of selecting new data subsets to efficiently fine-tune the past model, ensuring its effectiveness and preventing forgetting at the new period.

Some recent continual learning works have also applied the domain adaptation theory to their research, which are UDIL \cite{regularization-UDIL} and SSRM \cite{GCL-SSRM}. However, both of them differ largely from ours. UDIL focuses on automatically finding the most suitable hyper-parameters for the losses to best balance the model stability and plasticity during learning new datasets.
SSRM directly minimizes the distribution discrepancy between old and new data.
Our work takes an orthogonal direction from them by selecting the most representative subset from the old class data.
Besides, our TGCL problem differs from the previous continual learning settings in the considering the evolving old-class data.

\section{Proofs of Theorems}
\label{appx: theory proof}

\setcounter{theorem}{0}

We first introduce and prove the important Lemma 3 that is originally proposed in \cite{related-domain-adaptation-2010}:

\begin{lemma}
	\label{lemma: distribution}
	For any hypothesis $h$, $h'\in \mathcal{H}$ and any two different data distributions $\mathcal{D}, \mathcal{D}'$,
	\begin{equation}
		|\epsilon(h, h'|\mathcal{D}) - \epsilon(h, h'|\mathcal{D'}) | \leq \frac{1}{2}d_{\mathcal{H}\Delta\mathcal{H}}(\mathcal{D}, \mathcal{D}'),
	\end{equation}
	where $\epsilon(h, h'|\mathcal{D}) := \mathop{\mathbb{E}}_{x \in \mathcal{D}} [h(x) \neq h'(x)]$ is the expected prediction differences of $h$ and $h'$ on $\mathcal{D}$, and 
	\begin{align}
		d_{\mathcal{H}\Delta \mathcal{H}}(\mathcal{D}, \mathcal{D}') :=2\sup _{h, h'\in \mathcal{H}} \arrowvert &  Pr_{x \in \mathcal{D}}[h(x)\neq h'(x)] \nonumber \\
		&-Pr_{x\in \mathcal{D}'}[h(x)\neq h'(x)] \arrowvert \nonumber
	\end{align}
	 is the discrepancy between the two distributions.
	\end{lemma}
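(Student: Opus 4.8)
The plan is to prove the inequality by a direct supremum argument, exploiting the fact that the disagreement error $\epsilon(h,h'|\mathcal{D})$ is itself a probability that already sits inside the supremum defining $d_{\mathcal{H}\Delta\mathcal{H}}$. First I would rewrite the expected prediction difference as a disagreement probability: since $\epsilon(h,h'|\mathcal{D}) = \mathbb{E}_{x\sim\mathcal{D}}[\mathbf{1}\{h(x)\neq h'(x)\}] = Pr_{x\sim\mathcal{D}}[h(x)\neq h'(x)]$, and likewise for $\mathcal{D}'$, the left-hand side of the claim becomes exactly $|Pr_{x\sim\mathcal{D}}[h(x)\neq h'(x)] - Pr_{x\sim\mathcal{D}'}[h(x)\neq h'(x)]|$.

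Next I would observe that this quantity is precisely one admissible term in the supremum that defines $\tfrac{1}{2}d_{\mathcal{H}\Delta\mathcal{H}}(\mathcal{D},\mathcal{D}')$. By the definition stated in the lemma, $\tfrac{1}{2}d_{\mathcal{H}\Delta\mathcal{H}}(\mathcal{D},\mathcal{D}') = \sup_{g,g'\in\mathcal{H}} |Pr_{x\sim\mathcal{D}}[g(x)\neq g'(x)] - Pr_{x\sim\mathcal{D}'}[g(x)\neq g'(x)]|$. Instantiating the pair $(g,g')$ with the given fixed pair $(h,h')\in\mathcal{H}\times\mathcal{H}$ yields one value in the set over which the supremum is taken, and any single value of a set is bounded above by its supremum. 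This immediately gives $|\epsilon(h,h'|\mathcal{D}) - \epsilon(h,h'|\mathcal{D}')| \le \tfrac{1}{2}d_{\mathcal{H}\Delta\mathcal{H}}(\mathcal{D},\mathcal{D}')$, which is the claim, and it requires no inequality stronger than ``a member is at most the sup.''

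If instead one reads $d_{\mathcal{H}\Delta\mathcal{H}}$ in its symmetric-difference-class form — where the supremum ranges over $g\in\mathcal{H}\Delta\mathcal{H}$ and measures $|Pr_{\mathcal{D}}[g=1]-Pr_{\mathcal{D}'}[g=1]|$, as used in Theorem~\ref{theorem: upper-bound} — then I would first identify the XOR hypothesis $g := h\oplus h'$, note that $g\in\mathcal{H}\Delta\mathcal{H}$ and $\{g=1\}=\{h\neq h'\}$, so that $\epsilon(h,h'|\mathcal{D})=Pr_{\mathcal{D}}[g=1]$, and then invoke the same single-term-versus-supremum step. The only point genuinely requiring care — and the one ``obstacle'' worth flagging — is verifying that these two presentations of $d_{\mathcal{H}\Delta\mathcal{H}}$ coincide; this holds because every element of $\mathcal{H}\Delta\mathcal{H}$ arises as such an XOR and its positive region is exactly the disagreement set of the underlying pair. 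Once that equivalence is recorded, the argument is purely at the population (distribution) level, so no VC-dimension, concentration, or sampling estimate enters.
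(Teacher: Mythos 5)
Your proof is correct and matches the paper's own argument exactly: both rewrite $\epsilon(h,h'|\mathcal{D})$ as the disagreement probability $Pr_{x\sim\mathcal{D}}[h(x)\neq h'(x)]$ and then bound the fixed pair $(h,h')$ by the supremum in the definition of $d_{\mathcal{H}\Delta\mathcal{H}}$. Your extra remark reconciling this pairwise-supremum form with the symmetric-difference-class form via the XOR hypothesis $g=h\oplus h'$ (the form used in Theorem~3.1) is a useful clarification the paper leaves implicit, but the core step is the same.
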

\begin{proof}
	By definition, we have
	\begin{align}
		d_{\mathcal{H}\Delta\mathcal{H}}(\mathcal{D}, \mathcal{D}') & = 2 \sup_{h, h' \in \mathcal{H}} | \mathbb{P}_{x\in \mathcal{D}}[h(x)\neq h'(x)] \nonumber \\
		&\ \ \ \ \  - \mathbb{P}_{x\in \mathcal{D}'}[h(x)\neq h'(x)] | \nonumber \\
		& = 2 \sup_{h, h' \in \mathcal{H}} \left| \epsilon(h, h'|\mathcal{D}) - \epsilon(h, h'|\mathcal{D'}) \right| \nonumber \\
		& \geq 2 \left| \epsilon(h, h'|\mathcal{D}) - \epsilon(h, h'|\mathcal{D'}) \right|
	\end{align}
\end{proof}

\renewcommand{\thetheorem}{3.1}

Based on Lemma~\ref{lemma: distribution}, Theorem~\ref{theorem: upper-bound} can be prove as follows:
\begin{theorem}
	Let $\mathcal{G}_N^{old}, \mathcal{G}_N^{sub}$ be the distributions of $G_N^{old}$ and $G_N^{sub}$.
	Let $h \in \mathcal{H}$ be a function in the hypothesis space $\mathcal{H}$ and 
	$\tilde{h}^{sub}_N$ be the function optimized on $\mathcal{G}_N^{sub}$.
	The classification error on $\mathcal{G}_N^{old}$ then has the following upper bound:
	\begin{align}
		\min_{h\in \mathcal{H}}
		\epsilon(h | \mathcal{G}_N^{old}) 
		\leq 
		\min_{h,\ \mathcal{G}_N^{sub}}
		\ 
		& \epsilon(\tilde{h}^{sub}_N | \mathcal{G}_N^{old})
		+ 
		\frac{1}{2}d_{\mathcal{H}\Delta \mathcal{H}}(\mathcal{G}_N^{old}, \mathcal{G}_N^{sub}) \nonumber \\ 
		& +
		\epsilon(h, \tilde{h}^{sub}_N | \mathcal{G}_N^{sub}). \nonumber
	\end{align}
\end{theorem}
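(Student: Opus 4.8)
The plan is to bound $\epsilon(h\mid\mathcal{G}_N^{old})$ for an \emph{arbitrary} fixed $h\in\mathcal{H}$ and an arbitrary candidate subset distribution $\mathcal{G}_N^{sub}$ (with its induced optimizer $\tilde{h}^{sub}_N$), and only at the very end take the minima on both sides. The reduction is immediate: since the left-hand side is $\min_{h}\epsilon(h\mid\mathcal{G}_N^{old})\le\epsilon(h\mid\mathcal{G}_N^{old})$ for every $h$, it suffices to produce the three-term bound for each fixed pair $(h,\mathcal{G}_N^{sub})$; the inequality then survives taking $\min_{h,\mathcal{G}_N^{sub}}$ on the right.

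First I would rewrite the target error as a disagreement with the ground-truth labeler, $\epsilon(h\mid\mathcal{G}_N^{old})=\epsilon(h,f\mid\mathcal{G}_N^{old})$, and insert $\tilde{h}^{sub}_N$ via the triangle inequality for the $0/1$ disagreement $\epsilon(\cdot,\cdot\mid\mathcal{D})$. This step relies on the elementary pointwise bound $[\,h(x)\neq f(x)\,]\le[\,h(x)\neq \tilde{h}^{sub}_N(x)\,]+[\,\tilde{h}^{sub}_N(x)\neq f(x)\,]$, which upon taking expectation over $\mathcal{G}_N^{old}$ yields
\begin{align}
\epsilon(h\mid\mathcal{G}_N^{old})
\le \epsilon(h,\tilde{h}^{sub}_N\mid\mathcal{G}_N^{old})
+ \epsilon(\tilde{h}^{sub}_N,f\mid\mathcal{G}_N^{old}). \nonumber
\end{align}
The second summand is exactly $\epsilon(\tilde{h}^{sub}_N\mid\mathcal{G}_N^{old})$, matching the first term of the claimed bound. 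It remains to transfer the cross-term $\epsilon(h,\tilde{h}^{sub}_N\mid\mathcal{G}_N^{old})$ from the target distribution to the subset distribution.

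For that transfer I would invoke Lemma~\ref{lemma: distribution} with the pair $(h,\tilde{h}^{sub}_N)$ and distributions $\mathcal{D}=\mathcal{G}_N^{old}$, $\mathcal{D}'=\mathcal{G}_N^{sub}$, giving $\epsilon(h,\tilde{h}^{sub}_N\mid\mathcal{G}_N^{old})\le \epsilon(h,\tilde{h}^{sub}_N\mid\mathcal{G}_N^{sub})+\tfrac12 d_{\mathcal{H}\Delta\mathcal{H}}(\mathcal{G}_N^{old},\mathcal{G}_N^{sub})$. Substituting this back produces precisely the three terms of the statement, and a final pass to $\min_{h,\mathcal{G}_N^{sub}}$ completes the argument. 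The main obstacle I anticipate is bookkeeping rather than depth: one must ensure Lemma~\ref{lemma: distribution} legitimately applies to $\tilde{h}^{sub}_N$ (it must be treated as an element of $\mathcal{H}$ so that $(h,\tilde{h}^{sub}_N)$ ranges inside the supremum defining $d_{\mathcal{H}\Delta\mathcal{H}}$), and one must handle the coupling that $\tilde{h}^{sub}_N$ is itself determined by $\mathcal{G}_N^{sub}$. The latter is harmless because the per-pair inequality holds for every $\mathcal{G}_N^{sub}$ together with its own $\tilde{h}^{sub}_N$, so the joint minimization on the right is taken over consistent pairs and the bound is preserved.
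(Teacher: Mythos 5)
Your proposal is correct and follows essentially the same route as the paper's proof: the triangle inequality for the $0/1$ disagreement inserts $\tilde{h}^{sub}_N$ to produce $\epsilon(h,\tilde{h}^{sub}_N\mid\mathcal{G}_N^{old})+\epsilon(\tilde{h}^{sub}_N\mid\mathcal{G}_N^{old})$, and Lemma~\ref{lemma: distribution} transfers the cross-term from $\mathcal{G}_N^{old}$ to $\mathcal{G}_N^{sub}$ at a cost of $\tfrac12 d_{\mathcal{H}\Delta\mathcal{H}}(\mathcal{G}_N^{old},\mathcal{G}_N^{sub})$ (the paper writes this transfer as an add-and-subtract followed by an absolute-value bound, which is the same step). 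Your closing remarks on passing to the minima and on the coupling between $\mathcal{G}_N^{sub}$ and $\tilde{h}^{sub}_N$ are also consistent with, and slightly more careful than, the paper's one-line justification.
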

\begin{proof}
	From the triangle inequality in \cite{related-domain-adaptation-2010} and Lemma~\ref{lemma: distribution},
	\begin{align}
		\epsilon(h | \mathcal{G}_N^{old}) & \leq \epsilon(h, \tilde{h}_N^{sub} | \mathcal{G}_N^{old}) + \epsilon(\tilde{h}_N^{sub} | \mathcal{G}_N^{old}) \nonumber \\ 
		& = \epsilon(h, \tilde{h}_N^{sub} | \mathcal{G}_N^{old}) + \epsilon(\tilde{h}_N^{sub} | \mathcal{G}_N^{old}) - \epsilon(h, \tilde{h}_N^{sub} | \mathcal{G}_N^{sub}) \nonumber \\
		&\ \ \ \ \ \ \ \ +  \epsilon(h, \tilde{h}_N^{sub} | \mathcal{G}_N^{sub}) \nonumber \\
		& \leq \epsilon(\tilde{h}_N^{sub} | \mathcal{G}_N^{old}) + |  \epsilon(h, \tilde{h}_N^{sub} | \mathcal{G}_N^{old}) - \epsilon(h, \tilde{h}_N^{sub} | \mathcal{G}_N^{sub}) | \nonumber \\
		& \ \ \ \ \ \ \ \ +  \epsilon(h, \tilde{h}_N^{sub} | \mathcal{G}_N^{sub}) \nonumber \\ 
		& \leq \epsilon(\tilde{h}_N^{sub} | \mathcal{G}_N^{old}) + \frac{1}{2}d_{\mathcal{H}\Delta\mathcal{H}}(\mathcal{G}_N^{old}, \mathcal{G}_N^{sub}) \nonumber \\
        &  \ \ \ \ \ \ \ \ +  \epsilon(h, \tilde{h}_N^{sub} | \mathcal{G}_N^{sub}).\nonumber 
	\end{align}
	As this inequality applies for all $h \in \mathcal{H}$ and $G_N^{sub}$, the minimum of the left side is always less than the minimum of the right side.
\end{proof}

\section{Pesudo Code of LTF}

\label{appx: Pesudo Code}

The pseudo code of the Learning Towards the Future (LTF) method is presented in Algorithm~\ref{alg} below.

\begin{algorithm}[tb]
    \caption{Pseudo Code}
    \label{alg}
    \begin{algorithmic}[1]
        \STATE {\bfseries Input:} $G_N^{old}, G_N^{new}, \tilde{h}_{N-1}$
        \STATE $G_N^{sub} \leftarrow \{\}$, $G_N^{sim} \leftarrow \{\}$
        \STATE Partition $G_N^{old}$ into parts with sizes $p$, resulting in $W=\lceil|G_N^{old}|/p\rceil$ parts $\{G_{N,w}^{old}\}_{w=1}^W$
        \FOR{$G_{N,w}^{old} \in \{G_{N,w}^{old}\}_{w=1}^W$}
            \STATE Select $\tilde{G}_{N,w}^{sub}$ of size $m/W$ by optimizing Eq.\eqref{eq: selection}
            \STATE Select $\tilde{G}_{N,w}^{sim}$ of size $m/W$ by optimizing Eq.\eqref{eq: core selection}
            \STATE $G_N^{sub} \leftarrow G_N^{sub} \cup \tilde{G}_{N,w}^{sub}$, $G_N^{sim} \leftarrow G_N^{sim} \cup \tilde{G}_{N,w}^{sim}$
        \ENDFOR
        \STATE $\tilde{h}_N = \arg\min_{h\in \mathcal{H}} \hat{\epsilon}(h|G_N^{new}) + \hat{\epsilon}(h|G_N^{sub}) + \beta l_{dst}(G^{sim}_N, G^{sub}_{N} | h)$
        \STATE \textbf{return} $\tilde{h}_N$
    \end{algorithmic}
 \end{algorithm}

\renewcommand{\thetheorem}

\section{Definition of RBF Kernel}
\begin{definition}[Radial Basis Function Kernel \cite{related-rbf}]
	Consider the radial basis function kernel $\mathbb{K}$ with entries $k_{i,j} = k(x_i, x_j) = \exp (-\gamma ||x_i-x_j||)$ evaluated on a sample set $X$ with non-duplicated points i.e. $x_i \neq x_j \forall x_i,x_j \in X$.
	The off-diagonal kernel entries $k_{i,j}, i\neq j,$ monotonically decrease with respect to increasing $\gamma$.
\end{definition}

\section{Complexity Analysis}

Note that the sizes of $G^{old}_N$, $G^{sub}_N$ and $G^{sim}_N$ as $r$, $m$ and $m'$, and $G^{old}_N$ is partitioned into $W$ groups, our time complexity is analyzed as follows:

\textbf{Selection}: For each partition, we first need $O(r/W)$ to obtain the errors and embeddings of all nodes. When selecting $G^{sub}_N$ from each partition, it takes $O(m/W)$ for loss estimation and $O(rm/W^2)$ for distribution estimation, which finally gives $O(m/W+rm/W^2)$. For $G^{sim}_N$, we only need $O(rm'/W^2)$ for distribution estimation. Because different partitions can be processed in parallel, the overall selection complexity is $O((r+m)/W+r(m+m')/W^2)$.

\textbf{Learning}: When learning $G^{sub}_N$, the complexity for error is $O(m)$ and that for distribution alignment is $O(m m')$. When learning $G^{new}_N$, the complexity is $O(|G^{new}_N|)$. So the overall learning complexity is $O(m m' + |G^{new}_N|)$.

\begin{table*}[!t]
	\centering
	\caption{The performance of different methods on the TGAT backbone. The reported values are the mean and standard deviation of AP, AF and Time.}
	\label{tab: main_exp_TGAT_full}
	\small
	\setlength{\tabcolsep}{2pt}
	\resizebox{1.0\linewidth}{!}{
		\begin{tabular}{l|ccccccccc}
			\hline
			\multicolumn{1}{c|}{\multirow{3}{*}{Method}} & \multicolumn{9}{c}{TGAT}                                                                                                                                                                                                                                                                   \\ \cline{2-10} 
			\multicolumn{1}{c|}{}                        & \multicolumn{3}{c|}{Yelp}                                                                           & \multicolumn{3}{c|}{Reddit}                                                                         & \multicolumn{3}{c}{Amazon}                                                     \\ \cline{2-10} 
			\multicolumn{1}{c|}{}                        & AP$\uparrow$              & AF$\downarrow$            & \multicolumn{1}{c|}{Time$\downarrow$}       & AP$\uparrow$              & AF$\downarrow$            & \multicolumn{1}{c|}{Time$\downarrow$}       & AP$\uparrow$              & AF$\downarrow$            & Time$\downarrow$       \\ \hline
			Joint                                        & 0.0810±0.0033             & ---                       & \multicolumn{1}{c|}{58.37±0.60}             & 0.1378±0.0031             & ---                       & \multicolumn{1}{c|}{50.50±0.50}             & 0.1477±0.0014             & ---                       & 128.71±1.40            \\
			Finetune                                     & 0.0141±0.0045             & 0.0843±0.0000             & \multicolumn{1}{c|}{9.11±0.10}              & 0.0312±0.0051             & 0.1550±0.0000             & \multicolumn{1}{c|}{14.93±0.14}             & 0.0340±0.0213             & 0.1408±0.0000             & 65.81±2.60             \\ \hline
			LwF                & 0.0209±0.0050             & 0.0620±0.0028             & \multicolumn{1}{c|}{13.90±0.22}             & 0.0439±0.0057             & 0.1091±0.0046             & \multicolumn{1}{c|}{23.67±0.44}             & 0.0303±0.0097             & 0.1024±0.0105             & 102.92±3.92            \\
			EWC                & 0.0443±0.0142             & 0.0408±0.0178             & \multicolumn{1}{c|}{\textbf{9.19±0.15}}     & 0.0467±0.0063             & 0.1384±0.0136             & \multicolumn{1}{c|}{\textbf{14.95±0.14}}    & 0.0524±0.0292             & 0.1152±0.0363             & \textbf{68.37±3.69}    \\
			iCaRL                    & 0.0607±0.0035             & \underline{0.0198±0.0066} & \multicolumn{1}{c|}{\underline{11.57±0.14}} & 0.0602±0.0076             & 0.0860±0.0274             & \multicolumn{1}{c|}{19.22±0.21}             & 0.0699±0.0127             & 0.0794±0.0145             & 70.03±0.48             \\
			ER                          & 0.0521±0.0098             & 0.0332±0.0108             & \multicolumn{1}{c|}{11.63±0.15}             & 0.0622±0.0146             & 0.0783±0.0157             & \multicolumn{1}{c|}{\underline{19.07±0.13}} & 0.0799±0.0148             & 0.0617±0.0293             & \underline{69.06±3.11} \\
			SSM                           & 0.0552±0.0070             & 0.0232±0.0110             & \multicolumn{1}{c|}{11.82±0.21}             & 0.0308±0.0068             & 0.1203±0.0143             & \multicolumn{1}{c|}{82.99±2.23}             & 0.0723±0.0164             & 0.0912±0.0076             & 145.27±3.78            \\
			OTGNet*                   & \underline{0.0648±0.0120} & 0.0236±0.0139             & \multicolumn{1}{c|}{316.15±17.13}           & \underline{0.0868±0.0071} & \underline{0.0518±0.0013} & \multicolumn{1}{c|}{49.42±5.55}             & \underline{0.1031±0.0259} & \underline{0.0459±0.0232} & 709.49±42.81           \\
			URCL                   & 0.0562±0.0091 & 0.0303±0.0085             & \multicolumn{1}{c|}{11.57±0.13}           & 0.0726±0.0140 & 0.0649±0.0170 & \multicolumn{1}{c|}{20.13±0.55}             & 0.0915±0.0065 & 0.0431±0.0130 & 70.32±2.19           \\
			LTF                                          & \textbf{0.0682±0.0108}    & \textbf{0.0195±0.0130}    & \multicolumn{1}{c|}{25.05±0.37}             & \textbf{0.0871±0.0052}    & \textbf{0.0474±0.0097}    & \multicolumn{1}{c|}{39.16±0.62}             & \textbf{0.1110±0.0018}    & \textbf{0.0165±0.0041}    & 72.94±2.17             \\ \hline
		\end{tabular}
	}
	\vspace{-10px}
\end{table*}

\begin{table*}[!t]
	\centering
	\caption{The performance of different methods on the DyGFormer backbone. The reported values are the mean and standard deviation of AP, AF and Time.}
	\label{tab: main_exp_DyGFormer_full}
	\small
	\setlength{\tabcolsep}{2pt}
	\resizebox{1.0\linewidth}{!}{
		\begin{tabular}{l|ccccccccc}
			\hline
			\multicolumn{1}{c|}{\multirow{3}{*}{Method}} & \multicolumn{9}{c}{DyGFormer}                                                                                                                                                                                                                                                              \\ \cline{2-10} 
			\multicolumn{1}{c|}{}                        & \multicolumn{3}{c|}{Yelp}                                                                           & \multicolumn{3}{c|}{Reddit}                                                                         & \multicolumn{3}{c}{Amazon}                                                     \\ \cline{2-10} 
			\multicolumn{1}{c|}{}                        & AP$\uparrow$              & AF$\downarrow$            & \multicolumn{1}{c|}{Time$\downarrow$}       & AP$\uparrow$              & AF$\downarrow$            & \multicolumn{1}{c|}{Time$\downarrow$}       & AP$\uparrow$              & AF$\downarrow$            & Time$\downarrow$       \\ \hline
			Joint                                        & 0.0813±0.0038             & ---                       & \multicolumn{1}{c|}{95.11±2.04}             & 0.1256±0.0140             & ---                       & \multicolumn{1}{c|}{70.64±1.22}             & 0.1500±0.0054             & ---                       & 177.38±4.27            \\
			Finetune                                     & 0.0172±0.0008             & 0.0800±0.0000             & \multicolumn{1}{c|}{14.43±0.40}             & 0.0360±0.0008             & 0.1433±0.0000             & \multicolumn{1}{c|}{20.58±0.39}             & 0.0551±0.0154             & 0.1517±0.0258             & 88.34±0.85             \\ \hline
			LwF                & 0.0399±0.0079             & 0.0386±0.0087             & \multicolumn{1}{c|}{26.03±0.37}             & 0.0469±0.0107             & 0.0944±0.0112             & \multicolumn{1}{c|}{37.53±0.90}             & 0.0763±0.0207             & 0.0856±0.0189             & 155.03±2.31            \\
			EWC                & \underline{0.0601±0.0074} & 0.0295±0.0112             & \multicolumn{1}{c|}{\textbf{14.24±0.17}}    & 0.0521±0.0190             & 0.1046±0.0198             & \multicolumn{1}{c|}{\textbf{20.20±0.14}}    & 0.1005±0.0105             & 0.0832±0.0135             & \textbf{89.32±1.15}    \\
			iCaRL                    & 0.0558±0.0095             & \underline{0.0214±0.0156} & \multicolumn{1}{c|}{18.31±0.29}             & \underline{0.0917±0.0095} & \underline{0.0248±0.0125} & \multicolumn{1}{c|}{26.34±0.24}             & 0.0945±0.0255             & 0.0775±0.0396             & 92.36±1.03             \\
			ER                          & 0.0546±0.0066             & 0.0276±0.0094             & \multicolumn{1}{c|}{18.49±0.85}             & 0.0771±0.0172             & 0.0386±0.0170             & \multicolumn{1}{c|}{26.65±0.30}             & 0.1026±0.0029             & 0.0650±0.0057             & \underline{92.11±0.97} \\
			SSM                           & 0.0560±0.0009             & 0.0235±0.0044             & \multicolumn{1}{c|}{\underline{18.27±0.29}} & 0.0723±0.0246             & 0.0641±0.0310             & \multicolumn{1}{c|}{\underline{26.09±0.59}} & \underline{0.1063±0.0197} & \underline{0.0568±0.0170} & 92.15±1.48             \\
			OTGNet*                   & ---                       & ---                       & \multicolumn{1}{c|}{---}                    & ---                       & ---                       & \multicolumn{1}{c|}{---}                    & ---                       & ---                       & ---                    \\
			URCL                   & 0.0584±0.0064                       & 0.0216±0.0083                       & \multicolumn{1}{c|}{20.13±0.87}                    & 0.0902±0.0112                       & 0.0284±0.0182                       & \multicolumn{1}{c|}{27.58±1.33}                    & 0.1089±0.0110                       & 0.0566±0.0112                       & {93.43±2.45}                    \\
			LTF                                          & \textbf{0.0681±0.0064}    & \textbf{0.0096±0.0073}    & \multicolumn{1}{c|}{51.80±0.75}             & \textbf{0.1134±0.0089}    & \textbf{0.0081±0.0120}    & \multicolumn{1}{c|}{58.56±1.17}             & \textbf{0.1253±0.0139}    & \textbf{0.0383±0.0121}    & 101.06±3.50            \\ \hline
		\end{tabular}
	}
	\vspace{-10px}
\end{table*}

\begin{figure*}[!t]
	\centering
	\begin{minipage}{0.49\linewidth}
		\centering
		\includegraphics[width=\linewidth]{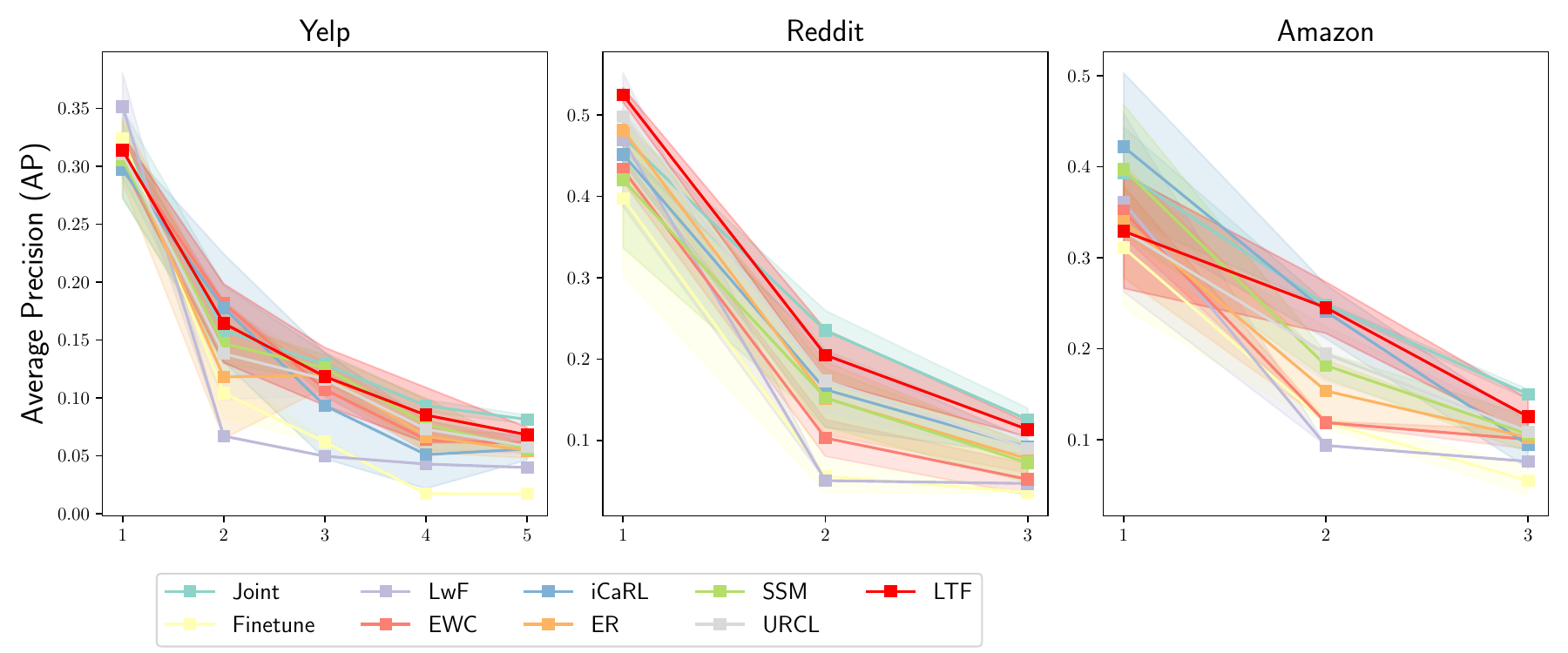}
		\caption{The average precision (AP) of LTF and the baselines at each period, based on DyGFormer.}
		\label{fig: learning process AP DyGFormer}
	\end{minipage}
	\hfill
	\begin{minipage}{0.49\linewidth}
		\centering
		\includegraphics[width=\linewidth]{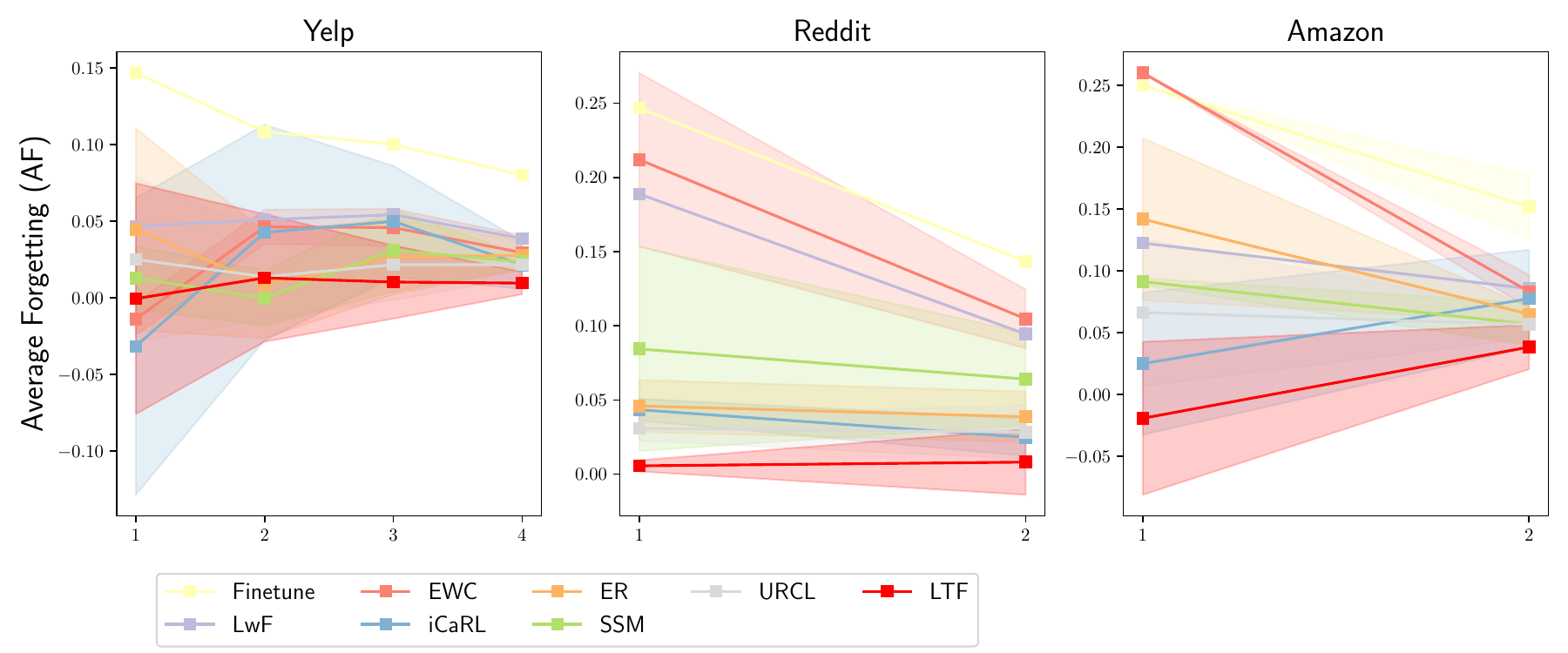}
		\caption{The average forgetting (AF) of LTF and the baselines at each period, based on DyGFormer.}
		\label{fig: learning process AF DyGFormer}
	\end{minipage}
\end{figure*}

\section{Data Set Details}
\label{appx: data set}

\paragraph{Yelp}
Yelp is a business review dataset that contains a large amount of reviews on different businesses.
When buiding Yelp data set, we regard the businesses as the nodes to construct the temporal graph.
The business categories are used as the class labels.
From 2015 to 2019, we take each of the five years as one period of temporal graph.
The reviews from the same user within a month create events among the bussinesses they are evaluating.
For each period, we select the largest three categories as the new classes and include corresponding businesses into the temporal graphs from then on.
We extract word embeddings on the reviews of each business with GloVe-200d, and average these 200-dimension embeddings to get the initial node features.

\paragraph{Reddit}
Reddit is a online forum dataset that contains a large amount of posts and comments on different topics.
When buiding Reddit data set, we regard the posts as the nodes to construct the temporal graph, following the paradigm of \cite{related-data-construction}.
The post topics are used as the class labels.
We take January 1st 2017 as the start date and construct the temporal graphs of 20 days a period.
The comments from the same user within 5 days create events among the posts they are commenting one.
There are three periods of temporal graphs created.
For each period, we select the 5 topics that have generally even number of posts as the new classes.
We extract word embeddings on the comments of each post with GloVe-200d, and average these 200-dimension embeddings to get the initial node features.

\paragraph{Amazon}
Amazon is a product review dataset that contains a large amount of reviews on different products.
When buiding Amazon data set, we regard the products as the nodes to construct the temporal graph.
The product categories are used as the class labels.
Starting from January 1st 2016, we take every 24 days as one period of temporal graph.
The reviews from the same user within 5 days create events among the products they are reviewing.
There are three periods of temporal graphs created.
For each period, we select the 3 products that have generally even number of reviews as the new classes.
We extract word embeddings on the reviews of each product with GloVe-200d, and average these 200-dimension embeddings to get the initial node features.

\section{Selection on the Hyper-parameters}
\label{appx: hyper-parameters}

The hyperparameters related to the backbone models are selected within the reported range of DyGFormer \cite{TGL-DyGFormer} based on the experience. We do not conduct further tuning on the backbone performance. The hyperparameters of our method are selected by grid search. The searching ranges and results of important hyper-parameters are reported in Sec.~\ref{ssec: sensitivity analysis}, with standard deviation reported as well. The hyperparameters of the baselines are selected by grid search as well, whose names and values are:
Weight of regularization loss (LwF, EWC): [0.1, 0.5, 1, 2];
Size of exemplar set (iCaRL, ER, SSM, OTGNet): 1000 for Reddit and Amazon and 500 for Yelp, which are the same as ours for fair comparison;
Number of maintained neighbors (SSM): [5, 10, 20];
The other hyperparameters of OTGNet \cite{TGCL-OTGNet} are searched in the same range as reported in the original paper. The searching is performed on hyperopt package with 10 iterations.

\section{Full results on Main Experimetns}
\label{appx: full results}

In this section, we present the results, including standard deviations, for Tab.~\ref{tab: main exp} in Sec.~\ref{ssec: main experiment}. The detailed results for TGAT are shown in Tab.~\ref{tab: main_exp_TGAT_full}, and those for DyGFormer are provided in Tab.~\ref{tab: main_exp_DyGFormer_full}. The results demonstrate that LTF not only performs well across all three datasets but also has a relatively low standard deviation, indicating the stability of the method.
The standard deviation for Finetune goes to smaller than 4 digits for most datasets because its forgetting issue is sever and at the end of increments the model stably forgets most of the knowledge.

Besides, the per period performances of all methods based on DyGFormer are shown in Fig.~\ref{fig: learning process AP DyGFormer} and Fig.~\ref{fig: learning process AF DyGFormer}. 
Because OTGNet is not compatible to DyGFormer, we exclude it from the presentation.
The results show that LTF consistently outperforms the other methods in terms of AP and AF across all periods.

\section{Full Results on Ablation Study}
\label{appx: ablation study}

The results of the ablation study in Tab.~\ref{tab: ablation study} with the standard deviation are shown in Tab.~\ref{tab: ablation study TGAT} and Tab.~\ref{tab: ablation study DyGFormer}.
The results demonstrate that the proposed LTF consistently outperforms the baselines across all datasets and metrics. The standard deviation of LTF is relatively low, indicating the stability of the method.

\section{Additional Study on Partition Number}
\label{appx: partition}

In order for random partitioning to preserve the original embedding distribution, the size of each partition should be larger than a threshold.
Statistically, based on the Dvoretzky–Kiefer–Wolfowitz (DKW) inequality, each partition should have 1152 samples to guarantee that the randomly sampled subset can approximate the population distribution with a 95\% confidence and 0.04 approximation error.
Compared with the size of our dataset (60k samples for each old class within a period in Amazon), this threshold is significantly smaller and can be easily satisfied.
In our experiment, we randomly partition the dataset into parts containing 6k samples each, which is sufficient to represent the original embedding distribution.

We further study the impact of different partition sizes based on DyGFormer backbone and Amazon datasets, whose AP results are presented in Tab. \ref{tab: partition_comparison}.
Performance decreases as the number of partitions increases, primarily because fewer samples in each partition result in higher selection errors.
On the other hand, random constantly outperforms other clustering methods, which is consistent with our analysis that keeping the original distribution is important for effective selection.

\begin{table}[!h]
	\centering
	\caption{Comparison of Different Partition Sizes.}
	\label{tab: partition_comparison}
	\begin{tabular}{r|ccc}
		\toprule
		\textbf{Size} & \textbf{10000} & \textbf{5000} & \textbf{2500} \\
		\midrule
		k-Means & 0.1104 & 0.1055 & 0.0994 \\
		Hierarchical & 0.1091 & 0.1063 & 0.1002 \\
		Random & \textbf{0.1253} & \textbf{0.1147} & \textbf{0.1027} \\
		\bottomrule
	\end{tabular}
\end{table}

\section{Sensitivity Analysis on DyGFormer}
\label{appx: sensitivity}

The additional sensitivity analysis results on DyGFormer is shown in Fig.~\ref{fig: sensitivity_DyGFormer}. The same conclusion as in Sec.~\ref{ssec: sensitivity analysis} can be drown from this set of results. This further validates the robustness of LTF on addressing TGCL.

\section{Future Directions}

While this work focuses on node classification, similar challenges in integrating newly introduced, differently-distributed data are prevalent in other temporal graph tasks, such as link prediction with new user profiles or content categories in social networks. By establishing a robust approach for handling open-class dynamics, our framework lays essential groundwork for future research.

\begin{table*}[!h]
	\centering
	\caption{Ablation study on the selecting and learning components of LTF based on TGAT with standard deviations. The applied components are noted with Y. 
		The best and second best results are noted in \textbf{Bold} and \underline{Underline}.
	}
	\label{tab: ablation study TGAT}
	\setlength{\tabcolsep}{1.5pt}
	\small
	\resizebox{1.0\linewidth}{!}{
		\begin{tabular}{ccc|ccccccccc}
			\hline
			\multicolumn{3}{c|}{Component}                       & \multicolumn{9}{c}{TGAT}                                                                                                                                                                                                                                                                                           \\ \hline
			\multicolumn{2}{c|}{Select}       & Learn            & \multicolumn{3}{c|}{Yelp}                                                                                   & \multicolumn{3}{c|}{Reddit}                                                                                 & \multicolumn{3}{c}{Amazon}                                                             \\ \hline
			Err. & \multicolumn{1}{c|}{Dist.} & $l_{dst}(\cdot)$ & AP$\uparrow$           & AF$\downarrow$         & \multicolumn{1}{c|}{Time$\downarrow$}                     & AP$\uparrow$           & AF$\downarrow$         & \multicolumn{1}{c|}{Time$\downarrow$}                     & AP$\uparrow$           & AF$\downarrow$         & Time$\downarrow$                     \\ \hline
			Y    &                            &                  & 0.0438±0.0117          & 0.0439±0.0136          & \multicolumn{1}{c|}{\multirow{3}{*}{\textbf{11.79±1.08}}} & 0.0579±0.0060          & 0.0736±0.0064          & \multicolumn{1}{c|}{\multirow{3}{*}{\textbf{19.28±0.77}}} & \underline{0.1063±0.0016}    & 0.0185±0.0033          & \multirow{3}{*}{\textbf{67.77±2.05}} \\
			& Y                          &                  & 0.0565±0.0093          & 0.0322±0.0101          & \multicolumn{1}{c|}{}                                     & 0.0640±0.0054          & 0.0695±0.0073          & \multicolumn{1}{c|}{}                                     & 0.0592±0.0019          & 0.0684±0.0041          &                                      \\
			Y    & Y                          &                  & \underline{0.0654±0.0104}    & \underline{0.0215±0.0110}    & \multicolumn{1}{c|}{}                                     & \underline{0.0866±0.0048}    & \underline{0.0447±0.0067}    & \multicolumn{1}{c|}{}                                     & 0.1004±0.0023          & \underline{0.0078±0.0032}    &                                      \\
			Y    & Y                          & Y                & \textbf{0.0682±0.0108} & \textbf{0.0195±0.0130} & \multicolumn{1}{c|}{\underline{25.05±0.37}}                     & \textbf{0.0871±0.0052} & \textbf{0.0474±0.0097} & \multicolumn{1}{c|}{\underline{39.16±0.62}}                     & \textbf{0.1110±0.0018} & \textbf{0.0165±0.0041} & \underline{72.94±2.17}                     \\ \hline
		\end{tabular}
	}
	
\end{table*}

\begin{table*}[!h]
	\centering
	\caption{Ablation study on the selecting and learning components of LTF based on DyGFormer with standard deviations. The applied components are noted with Y. 
		The best and second best results are noted in \textbf{Bold} and \underline{Underline}.
	}
	\label{tab: ablation study DyGFormer}
	\setlength{\tabcolsep}{1.5pt}
	\small
	\resizebox{1.0\linewidth}{!}{
		\begin{tabular}{ccc|ccccccccc}
			\hline
			\multicolumn{3}{c|}{Component}                       & \multicolumn{9}{c}{DyGFormer}                                                                                                                                                                                                                                                                                      \\ \hline
			\multicolumn{2}{c|}{Select}       & Learn            & \multicolumn{3}{c|}{Yelp}                                                                                   & \multicolumn{3}{c|}{Reddit}                                                                                 & \multicolumn{3}{c}{Amazon}                                                             \\ \hline
			Err. & \multicolumn{1}{c|}{Dist.} & $l_{dst}(\cdot)$ & AP$\uparrow$           & AF$\downarrow$         & \multicolumn{1}{c|}{Time$\downarrow$}                     & AP$\uparrow$           & AF$\downarrow$         & \multicolumn{1}{c|}{Time$\downarrow$}                     & AP$\uparrow$           & AF$\downarrow$         & Time$\downarrow$                     \\ \hline
			Y    &                            &                  & 0.0467±0.0073          & 0.0309±0.0079          & \multicolumn{1}{c|}{\multirow{3}{*}{\textbf{18.51±1.13}}} & 0.0807±0.0084          & 0.0407±0.0220          & \multicolumn{1}{c|}{\multirow{3}{*}{\textbf{27.12±2.02}}} & 0.1203±0.0172          & 0.0456±0.0131          & \multirow{3}{*}{\textbf{90.18±2.52}} \\
			& Y                          &                  & 0.0543±0.0056          & 0.0266±0.0076          & \multicolumn{1}{c|}{}                                     & 0.0863±0.0080          & 0.0350±0.0221          & \multicolumn{1}{c|}{}                                     & 0.1161±0.0167          & 0.0465±0.0126          &                                      \\
			Y    & Y                          &                  & \underline{0.0618±0.0060}    & \underline{0.0155±0.0082}    & \multicolumn{1}{c|}{}                                     & \underline{0.0939±0.0083}    & \underline{0.0272±0.0119}    & \multicolumn{1}{c|}{}                                     & \underline{0.1231±0.0108}    & \underline{0.0366±0.0117}    &                                      \\
			Y    & Y                          & Y                & \textbf{0.0681±0.0064} & \textbf{0.0096±0.0073} & \multicolumn{1}{c|}{\underline{51.80±0.75}}                     & \textbf{0.1134±0.0089} & \textbf{0.0081±0.0120} & \multicolumn{1}{c|}{\underline{58.56±1.17}}                     & \textbf{0.1253±0.0139} & \textbf{0.0383±0.0121} & \underline{101.06±3.50}                    \\ \hline
		\end{tabular}
	}
\end{table*}

\begin{figure*}[!h]
	\centering
	\includegraphics[width=\linewidth]{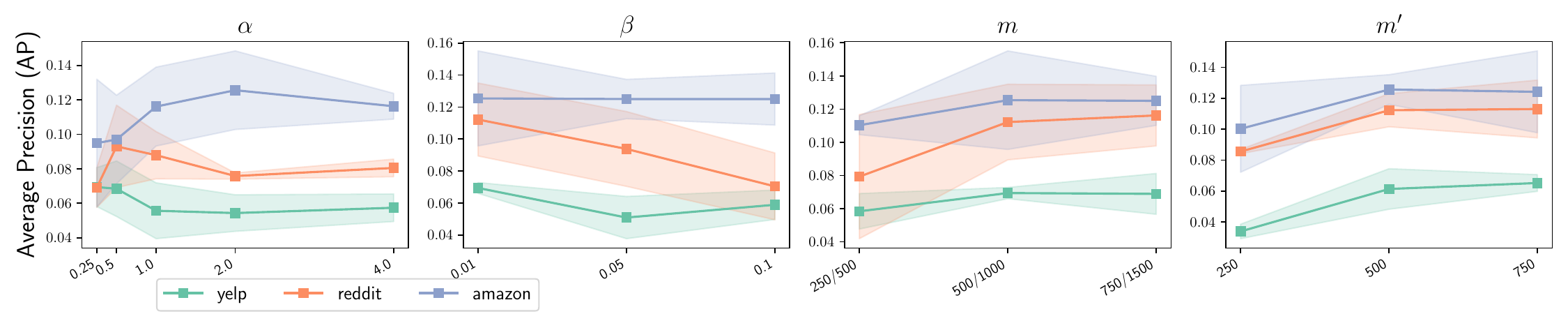}
	\caption{Sensitivity on the key hyper-parameters based on DyGFormer.}
	\label{fig: sensitivity_DyGFormer}
\end{figure*}


\end{document}